\documentclass[10pt]{article}

\usepackage[margin=1in]{geometry}
\usepackage{natbib}
\usepackage{parskip}

\usepackage{algorithm, algorithmicx, algpseudocode}
\usepackage{amssymb}
\usepackage{amsmath}
\usepackage{graphicx}
\usepackage{mathtools}
\usepackage{needspace} 
\usepackage{multicol}
\usepackage{listings}
\usepackage{amsthm}
\usepackage[shortlabels]{enumitem}
\usepackage[italicdiff]{physics}
\usepackage{cancel}
\usepackage[dvipsnames]{xcolor}
\usepackage{verbatim}
\usepackage{comment}
\usepackage{array}
\usepackage{tikz-cd}
\usepackage{import}
\usepackage{booktabs}
\usepackage{comment}
\usepackage{array}
\usepackage[colorlinks, linkcolor=blue, citecolor=ForestGreen]{hyperref}
\usepackage{forest}
\usepackage{float}
\usepackage{xspace}
\usepackage{aliascnt}
\usepackage{wrapfig}
\usepackage{thm-restate}
\usepackage{adjustbox}
\usepackage{bm}
\usepackage{bbm}
\usepackage{complexity}
\usepackage{makecell}
\usepackage{nicefrac}
\usetikzlibrary{calc, positioning, external, arrows.meta}
\usepackage{cleveref}

\newcommand{\ind}{q}

\allowdisplaybreaks

\let\cite\citep

\newfloat{protocol}{tbp}{lop}
\floatname{protocol}{Protocol}

\newtheorem{theorem}{Theorem}
\numberwithin{theorem}{section}
\newtheorem*{theorem*}{Theorem}

\newtheorem{observation}[theorem]{Observation}
\newtheorem{lemma}[theorem]{Lemma}

\theoremstyle{definition}

\usepackage{caption}
\makeatletter
\renewcommand\thanks[1]{%
  \footnotemark%
  \protected@xdef\@thanks{\@thanks
    \protect\footnotetext[\the\c@footnote]{#1}}%
}
\makeatother

\title{Toward Optimal Regret in Robust Pricing:\\ Decoupling Corruption and Time}
\author{
	Kalana Kalupahana\thanks{Equal contribution.}\\[-0.2em]
	\footnotesize\texttt{kalanakalpitha.kalupahana@mail.polimi.it} \\[-0.2em]
	\footnotesize Politecnico di Milano 
	\and
	Francesco Emanuele Stradi\footnotemark[1]\\[-0.2em]
	\footnotesize\texttt{francescoemanuele.stradi@polimi.it} \\[-0.2em]
	\footnotesize Politecnico di Milano 
	\and
	Matteo Castiglioni \\[-0.2em]
	\footnotesize\texttt{matteo.castiglioni@polimi.it} \\[-0.2em]
	\footnotesize Politecnico di Milano 
	\and
	Alberto Marchesi \\[-0.2em]
	\footnotesize\texttt{alberto.marchesi@polimi.it} \\[-0.2em]
	\footnotesize Politecnico di Milano
}
\date{\today}

\begin{document}

\maketitle

\begin{abstract}
	We design the first regret guarantees for robust dynamic pricing that decouple the dependence on the corruption $C$ and the time horizon $T$.
	In dynamic pricing, a seller with unlimited supply of a good interacts with a stream of buyers over \( T \) rounds, with the goal of maximizing revenue. At each round $t$, the seller posts a price $p_t$, and the buyer purchases the good only if their unknown valuation $v^\star$ exceeds this price. The seller observes only the binary feedback $\mathbbm{1} \left\{ p_t \leq v^\star \right\}$, indicating whether a sale occurred. In the \emph{robust} pricing setting, a malicious adversary is allowed to corrupt this feedback in at most $C$ rounds. 
	Even if the learner knows the corruption $C$, the best known regret bound is $\mathcal{O}(C\log\log T)$ by \citet{gupta2025robust}.
	This leaves as an open problem to ``decouple'' the dependence on $C$ and $T$.
	In this work, we resolve this open problem. In particular, we develop a robust variant of binary search that achieves regret $\mathcal{O}(C+\log T)$ when the corruption $C$ is known and $\mathcal{O}(C+\log^2 T)$ when the corruption is unknown. 
\end{abstract}

\newpage

\tableofcontents

\newpage

\section{Introduction}

\emph{Dynamic pricing} is a central problem in sequential decision-making under uncertainty, first studied by~\citet{kleinberg2003value}. In its simplest form, a seller (the learner) with unlimited supply of a good interacts with a stream of buyers over \( T \) rounds. At each round \( t \), the seller posts a price \( p_t \in [0,1] \), and the buyer purchases the good only if their \emph{unknown} valuation \( v^\star \in [0,1] \) exceeds this price, \emph{i.e.}, when \( p_t \leq v^\star \). Thus, the seller's revenue at round \( t \) is given by:
\[
r_t = p_t \cdot \mathbbm{1} \{ p_t \leq v^\star \}.
\]
The goal of the seller is to maximize the total revenue over \( T \) rounds, or equivalently, to minimize the \emph{regret} $R_T$ with respect to always posting the optimal fixed price in hindsight, namely \( v^\star \). Formally:
\[
R_T = T \cdot v^\star - \textstyle\sum_{t=1}^T r_t.
\]

One of the main challenges in dynamic pricing is that, at the end of each round \( t \), the seller observes only \emph{binary feedback}, indicating whether a purchase occurred, \emph{i.e.}, the value of \( \mathbbm{1} \{ p_t \leq v^\star \} \). 
%
In their seminal work,~\citet{kleinberg2003value} show that such problems can be addressed very efficiently by designing a clever binary-search-style algorithm that achieves an optimal regret rate of \( \Theta(\log\log T) \). Since then, the dynamic pricing problem and several of its variants have been extensively studied (see Appendix~\ref{sec:related_works} for additional discussion on related work).

In this paper, we study the \emph{robust} version of the dynamic pricing problem described above---originally introduced by~\citet{krishnamurthy2023contextual}---in which the feedback received by the seller is subject to \emph{adversarial corruptions}. In particular, in at most \( C \) rounds, the feedback observed by the seller may be arbitrary. The goal is thus to derive regret guarantees that degrade gracefully as a function of \( C \).

Adversarial corruptions arise naturally in many real-world scenarios, where the observed sale/no-sale signal may fail to reflect the true comparison between the posted price and the buyer's valuation due to noise, strategic behavior, or manipulation. Remarkably, dealing with corruption is especially delicate in pricing. Since each round reveals only one bit of information, even a small number of corrupted observations may mislead the learning process. Put differently, the same structure that makes the uncorrupted problem highly learnable also renders it intrinsically fragile: once some of the comparisons are unreliable, naively extending binary search ideas is no longer sufficient.

The state-of-the-art approach for this setting is the recent algorithm of \citet{gupta2025robust}, which achieves regret $\mathcal{O}(C\log\log T)$---their result also extends to the $d$-dimensional contextual setting; see Appendix~\ref{sec:related_works} for a more detailed discussion.
Interestingly, they also show that regret guarantees of order \( \mathcal{O}(C + \log\log T) \) are unattainable, implying that obtaining a linear dependence on $C$ requires a departure from the classical $\mathcal{O}(\log\log T)$ rate typical of uncorrupted dynamic pricing.
Moreover, they achieve this linear dependence on $C$ in a restricted setting with \emph{cautious buyers}, in which corruptions are \emph{one-sided}: a buyer may refuse to purchase even when the posted price is below their valuation, but can never purchase when the price exceeds it. In this model and assuming to know the corruption $C$, they obtain a regret guarantee of order \( \mathcal{O}(C + \log T) \).
They leave open whether the same regret bound can be achieved under arbitrary but known corruption. We go beyond that and answer in the affirmative the following question:
\begin{center}
	\emph{Can regret $\mathcal{O}(C)+\textnormal{poly}(\log T)$ be achieved in dynamic pricing under arbitrary and unknown adversarial corruption?}
\end{center}

\subsection{Our Results and Techniques}
Motivated by the problem left open by~\citet{gupta2025robust}, we focus on regret guarantees of order
$\mathcal{O}(C)+ \text{poly}(\log T)$. This rate is provably optimal in the large-corruption regime
$C\geq \text{poly}(\log T)$, as suggested by the following simple observation.
\begin{observation}\label{obs}
	Consider the robust dynamic pricing problem with feedback corrupted in at most $C$ rounds.
	Then, any deterministic algorithm suffers regret $R_T\geq \Omega(C)$,
	even if $C$ is known.
\end{observation}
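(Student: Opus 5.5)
The plan is an indistinguishability argument: we build two instances that the adversary makes look identical to the seller for $\Theta(C)$ rounds, while the two instances have well-separated optimal prices. Fix a deterministic algorithm and let $T \ge 2C$; otherwise the claim follows from a trivial adjustment of constants. We take two valuations, $v_1 = 1/2$ and $v_2 = 1$, and let the adversary spend its budget of $C$ corruptions so that the feedback the seller sees is always what instance $v_1$ would produce. Concretely, in instance $v_2$, whenever the posted price $p_t$ lies in $(1/2, 1]$, the true feedback is $1$. The adversary flips it to $0$, which matches the feedback $\mathbbm{1}\{p_t \le 1/2\}$ of instance $v_1$. For prices $p_t \le 1/2$ both instances already give feedback $1$, so no corruption is needed there.

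Since the algorithm is deterministic, the price sequence is a fixed function of past feedback. As long as the adversary can afford to corrupt, the seller observes exactly the transcript of instance $v_1$, and therefore posts the same sequence $p_1, p_2, \dots$ in both instances. Let $N$ be the number of rounds among the first $C$ rounds in which $p_t > 1/2$. We split into two cases, and the bound will follow whichever case occurs.

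In the first case, $N \ge C/2$. Then in instance $v_1$ (no corruption at all) each such round yields zero revenue, so it costs regret $v_1 = 1/2$. This gives $R_T \ge C/4$.

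In the second case, $N < C/2$. Then at least $C/2$ of the first $C$ rounds have $p_t \le 1/2$. In instance $v_2$ each of these rounds gains at most $1/2$ against an optimum of $1$, costing regret at least $1/2$ per round. The adversary used at most $N < C$ corruptions over these rounds, so it stays within budget and the transcripts really are identical. This gives $R_T \ge C/4$ in instance $v_2$. Per-round regret is always nonnegative against the price $v^\star$ (revenue never exceeds $v^\star$), so the later rounds cannot reduce the total. Hence in either case some instance has $R_T \ge C/4 = \Omega(C)$, and the algorithm knowing $C$ does not affect the argument.

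The only delicate step is checking that the corruption budget suffices for the coupling. The adversary's rule must be specified online: corrupt whenever $p_t > 1/2$ in instance $v_2$, and stop after round $C$. We must verify that the number of corruptions is at most $N \le C$, which holds because corruptions happen only in rounds with $p_t > 1/2$ among the first $C$ rounds. The rest is direct counting.
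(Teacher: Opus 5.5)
Your proposal is correct and is essentially the paper's argument: two well-separated valuations, an adversary that spends its budget in the first $C$ rounds so the second instance's feedback mimics the first, determinism forcing identical prices, and constant regret per round in at least one of the two instances. Your case split on $N$ is simply an explicit version of the paper's ``sum of the regrets over the two instances is $\Omega(C)$''.

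One small fix: the model assumes $v^\star\in[0,1)$, so $v_2=1$ is not an admissible instance. Take, for example, $v_2=3/4$. Then rounds with $p_t\le 1/2$ still cost at least $1/4$ in that instance, which gives $R_T\ge C/8$.
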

The intuition is the following. Consider two instances with well-separated valuations, say
$v^\star_L=1/3$ and $v^\star_H=2/3$. For the first $C$ rounds, the adversary can use its corruption budget
to make the feedback observed under the second instance identical to the uncorrupted feedback that would have been
observed under the first one. Since the algorithm is deterministic,
the same prices are posted in both instances during these rounds.
However, for any posted price $p_t$, at least one of the two instances incurs constant regret. Hence, the sum of the regrets over the two instances is $\Omega(C)$, which
implies that one of them has regret $\Omega(C)$.

In this work, we provide regret guarantees for both known and unknown corruption $C$. 
At the core of both results lies a unified meta-algorithm, which is inspired by a classical binary search and then robustified through two additional components: \emph{safety checks} and \emph{backtracking}. This approach is primarily motivated by the impossibility result of \citet{gupta2025robust}, which shows that the $\mathcal O(\log\log T)$ rate achieved by clever approaches in the uncorrupted setting (see, \emph{e.g.},~\citep{kleinberg2003value}) is unattainable in the presence of adversarial corruption. This suggests that a classical binary-search-style approach---yielding a $\log T$\footnote{In this work, we will refer as $\log T$ to $\log_2 T.$} dependence---is the appropriate algorithmic approach for pricing under corruption.

The algorithm begins with a binary search phase designed to shrink the candidate interval.
This phase is naturally agnostic to the corruption level $C$ and is shared across both settings. We employ a potential function argument to prove that the number of ``wrong steps''---in which the algorithm narrows the search to an interval not containing $v^\star$---is bounded linearly in the corruption $C$.

Once the algorithm reaches a leaf interval of size $1/T$, it enters a ``commitment phase'' in which it tries to determine whether it is committing to the right interval. If it is the case, the algorithm can simply post the lower extreme, incurring a negligible per-round regret of at most $1/T$.
This step is tailored to the known or unknown corruption setting.

In the known corruption setting, the algorithm keeps posting the left and right extremes of the interval. If the feedback remains consistent with $v^\star$ belonging to the interval for $C+1$ times, we can safely assume that $v^\star$ belongs to the interval and thus starts committing to the left extreme. Any inconsistent feedback triggers a backtracking procedure, returning the search to the parent node, and restarting the binary search procedure from the parent node. We upperbound the number of rounds required to commit to the optimal interval and show that this strategy yields a regret bound of $\mathcal{O}(C + \log T)$, resolving the open problem posed by \citet{gupta2025robust}.

In the unknown-corruption setting, designing the commitment phase is more challenging. 
The algorithm must balance a fundamental tension: to quickly leave an incorrect leaf, it should query the right endpoint, but if the leaf is correct, these queries only lead to unnecessary regret. 
To address this, we design a randomized commitment strategy in which the algorithm queries the right endpoint with a probability that decreases over time.
The intuition is that, as a leaf survives more safety checks, either it is increasingly plausible that the leaf is correct, or a large number of corruptions have been used to make it appear correct. 
In both cases, the algorithm can reduce exploration: in the first case, because committing is likely safe, and in the second, because the extra regret can be charged to the corruption budget. 
By carefully scheduling the decaying probability, we balance the regret from staying in a wrong leaf against the regret from exploring in a correct one, obtaining a total regret bound of $\mathcal{O}(C+\log^2 T)$.

\subsection{Relation to Robust Binary Search}

Robust pricing is deeply connected with binary search problems under corruption~\citep{rivest1980coping,dagan2021entropy}.
Specifically, in standard, uncorrupted dynamic pricing, binary search is not optimal: the seminal result of~\citet{kleinberg2003value} shows that the threshold structure of the problem can be exploited much more efficiently, leading to the optimal \(\Theta(\log\log T)\) regret rate. 
However, as discussed above, this doubly logarithmic dependence is no longer compatible with a linear dependence on the corruption budget \(C\) under adversarial corruptions. 
Once the goal becomes to obtain guarantees of the form
$
\mathcal O(C)+\operatorname{poly}(\log T),
$
the problem naturally reduces to identify an approximately optimal solution as quickly as possible,  upper-bounding the total regret by the length of the learning phase.
Indeed, suppose that the learner can identify an interval of length at most \(1/T\) containing \(v^\star\). 
Then, by repeatedly posting the left endpoint of this interval, the learner guarantees a sale and suffers at most \(1/T\)  per-round regret. 
Identifying such small interval is the objective of robust binary search: each price query can be viewed as a comparison with \(v^\star\), and a corrupted observation corresponds to a comparison answer pointing to the wrong side.

This perspective connects our setting to the seminal work of \citet{rivest1980coping}, who study binary search with erroneous answers. 
In their model, at most \(C\) comparison answers may be corrupted, and the value of \(C\) is known to the algorithm. 
Their main result gives a worst-case query complexity of
$
\log T + C\log\log T + \mathcal O(C\log C).
$
Although this is the form in which their result is presented, a more refined analysis of their actual upper bound (which we provide in Appendix~\ref{app:robust_binary_search}) can lead to a worst-case query complexity of 
$
\mathcal{O}(C + \log T ).
$
Combined with the commit-to-the-left-endpoint argument above, this yields an alternative proof of the \(\mathcal O(C+\log T)\) regret guarantee for robust pricing in the known-corruption case.

The limitation of this approach is that it crucially relies on knowing the corruption budget in advance, and it cannot be easily extended to the case $C$ is \emph{not} known.
This motivates the approach developed in the present paper. 
We design a robust search procedure that preserves the same high-level goal---finding a \(1/T\)-accurate interval and then exploiting its left endpoint---but is built such that it can be extended to the unknown-corruption setting. 

\section{Preliminaries}
\label{sec:preliminaries}

We consider a \emph{dynamic pricing problem} over a horizon of \( T \) rounds, denoted by \( [T] = \{1, \ldots, T\} \), during which a seller with unlimited supply of a good interacts with a stream of buyers who share a common but unknown valuation \( v^\star \in [0,1) \) for one unit of the good.\footnote{We assume that \( v^\star \in [0,1) \) to avoid boundary cases in our analysis. All our results can be readily extended to the case \( v^\star \in [0,1] \) using standard techniques.} At each round \( t \in [T] \), the seller posts a price \( p_t \in [0,1] \) based on past observations. The buyer then decides whether to purchase the item according to a standard threshold rule: a sale occurs if and only if the posted price does not exceed the valuation. Accordingly, the seller observes binary feedback:
\[
\sigma_t := \mathbbm{1}\{p_t \le v^\star\},
\]
where $\sigma_t = 1$ indicates a successful sale and $\sigma_t = 0$ indicates no sale. The revenue collected by the seller at each round \( t \in [T] \) is therefore \( r_t := p_t \cdot \sigma_t \), and the seller’s goal is to maximize the cumulative revenue over the entire time horizon.

The seller's performance is evaluated in terms of their \emph{(cumulative) regret}, which measures how much revenue the seller loses compared to a clairvoyant benchmark that knows the true valuation \( v^\star \) and posts it at every round. Since posting \( v^\star \) always results in a sale and yields revenue exactly \( v^\star \) per round, the cumulative benchmark revenue is \( T \cdot v^\star \). Thus, the seller’s regret is defined as:
\[
R_T := T \cdot v^\star - \sum_{t=1}^T r_t.
\]

\subsection{Robust Dynamic Pricing}

In this paper, we study a \emph{robust} extension of the dynamic pricing model introduced above, in which the feedback observed by the seller may be corrupted by an adversary. Following the standard formulation for adversarial corruptions in pricing~\citep{krishnamurthy2023contextual, gupta2025robust}, we assume that the total number of corrupted rounds is bounded by a budget \( C \). We consider two settings of increasing difficulty: the \emph{known-corruption setting}, in which the seller knows the budget \( C \), and the \emph{unknown-corruption setting}, in which \( C \) is not known to the seller.

To distinguish between the truthful feedback generated according to \( v^\star \) and the feedback actually observed by the seller, we let:
\[
y_t := \mathbbm{1}\{ p_t \le v^\star \}
\]
denote the uncorrupted sale indicator, while \( \sigma_t \in \{0,1\} \) denotes the feedback actually observed by the seller.  
In the robust setting, \( \sigma_t \) may differ from \( y_t \) subject to the following budget constraint:
\[
\left| \{ t \in [T] : \sigma_t \neq y_t \} \right| \le C.
\]
On corrupted rounds, a truthful sale ($y_t=1$) may be reported as a no-sale ($\sigma_t=0$), and a truthful no-sale ($y_t=0$) may be reported as a sale ($\sigma_t=1$). On all uncorrupted rounds, the feedback is consistent with the threshold rule induced by $v^\star$.
As is standard in the literature~\citep{krishnamurthy2023contextual}, we assume that the adversary can decide whether to corrupt a round \( t \) based on the history and the seller’s distribution over prices at time \( t \), but not on the realized price.\footnote{We allow the adversary to decide \emph{how} to corrupt the feedback depending on the actual price selected by the seller.}

Our goal is to design dynamic pricing strategies for the seller that achieve regret guarantees depending on the corruption budget \( C \).
\section{Meta Algorithm}
\label{sec:algorithm}

In this section, we present a meta-algorithm for dynamic pricing under adversarial corruptions. In subsequent sections, we instantiate it with different components depending on whether the corruption budget is known or unknown to the seller.

Our approach is motivated by the impossibility result of \citet{gupta2025robust}, which shows that the $\mathcal O(\log\log T)$ regret achieved by clever approaches in the uncorrupted setting (see~\citep{kleinberg2003value}) is unattainable in the presence of adversarial corruptions.
Consequently, we build our dynamic pricing strategy upon a classic binary search, ``robustified'' through two additional components: \emph{(i) safety checks} and \emph{(ii) backtracking}.

		\begin{algorithm}[!htp] 
			\caption{Robust Dynamic Pricing Meta-Algorithm}
			\label{alg:meta}
			\begin{algorithmic}[1]
				\State $I \gets I_0=[0,1]$ \label{line:init-interval}
				\State $i \gets 0$
				\While{\textsc{TRUE}}
				\State  $I\gets I_{i}=[L,R)$
				\If{$I \notin \mathcal{L}$}
				\If{\textsc{SafetyCheck}$(I)=$ \textsc{FAIL}} \label{line:internal-fail}
				\State $i\gets i-1$
				\Else
				\State Post $M=\frac{L+R}{2}$ and observe $\sigma_M$ \label{line:midpoint-query}
				\State $i\gets i+1$
				\If{$\sigma_M=1$}
				\State $I_i \gets [M,R]$ \label{line:right-update}
				\Else
				\State $I_i \gets [L,M]$ \label{line:left-update}
				\EndIf
				\EndIf
				\Else
				\If{\textsc{Commit(I)}$=$ \textsc{FAIL}}
				\State $i \gets i-1$        
				\EndIf
				\EndIf
				\EndWhile
			\end{algorithmic}
		\end{algorithm}

		\begin{algorithm}[!htp]
			\caption{\textsc{SafetyCheck}}
			\label{alg:safety}
			\begin{algorithmic}[1]
				\Require Current interval $I=[L,R)$
				\State Post $L$ and observe $\sigma_L$
				\State Post $R$ and observe $\sigma_R$
				\If{$\sigma_L=0$ or $\sigma_R=1$}
				\State \Return \textsc{FAIL}
				\Else
				\State \Return \textsc{PASS}
				\EndIf
			\end{algorithmic}
		\end{algorithm}

In \Cref{alg:meta}, we provide the pseudocode of our meta-algorithm.
The algorithm begins with the unit interval \( I_0 := [0,1] \) and iteratively shrinks it to a ``leaf'' interval \( I_D \) of length at most \( 1/T \), which is reached at depth \( D := \log T \).\footnote{In principle, it should be \( D := \lceil \log T \rceil \). Throughout the paper, for ease of presentation, we often omit the \( \lceil \cdot \rceil \) operator.}
The algorithm operates under the (possibly incorrect) assumption that the true valuation \( v^\star \) lies within the current interval. It proceeds by shrinking the interval according to a standard binary search, augmented with additional safety checks. If the feedback becomes inconsistent with past observations, the algorithm triggers a backtracking procedure.

Given the current interval \(I = [L,R)\), the algorithm first performs a safety check by querying the endpoints $L$ and $R$. Let $\sigma_L$ and $\sigma_R$ denote the observed feedback. If $\sigma_L = 0$ (no sale at the lower extreme) or $\sigma_R = 1$ (sale at the upper extreme), the observations are inconsistent with the hypothesis $v^\star \in [L, R]$.\footnote{To handle boundary cases in which corruption is trivially evident---specifically, when \( L = 0 \) and \( \sigma_L = 0 \), or \( R = 1 \) and \( \sigma_R = 1 \)---we assume that the checks at \( L = 0 \) and \( R = 1 \) pass by default.}
In this case, the algorithm backtracks to the previous iteration of the binary search, \emph{i.e.}, the ``parent'' interval. If the safety check passes, the algorithm queries the midpoint \( M := \frac{L + R}{2} \). Based on the feedback \( \sigma_M \) obtained by posting \( M \), it updates the interval according to the standard binary search rule.

The process described above continues until the algorithm reaches a leaf interval in $\mathcal{L} := \{\ell_1, \dots, \ell_{2^D}\}$ at depth $D$. Notice that all the leaf intervals have length at most $1/T$. We denote the unique leaf interval containing $v^\star$ as $\ell^\star$.

Once a leaf interval is reached, the interval is sufficiently small that no further shrinking is required. The algorithm therefore enters a \emph{commitment phase}. For now, we consider an arbitrary committing strategy; its precise implementation depends on the specific setting---namely, whether the corruption budget \( C \) is known or unknown. At each round, this procedure may either continue committing or return a \textsc{FAIL} signal, in which case the algorithm backtracks to the parent interval.
Crucially, aside from the commitment phase, our algorithm does not require knowledge of $C$. As we will see in the following sections, this knowledge can instead be leveraged to design simpler and more effective commitment strategies.

In Figure~\ref{fig:robust-binary-tree}, we provide a graphical representation of the ``robustified'' binary search performed by our algorithm.

\begin{figure*}[t] 
	\centering
	\resizebox{\textwidth}{!}{%
		\begin{tikzpicture}[
			x=1cm,y=1cm,
			interval/.style={
				draw, rounded corners=2pt, thick, fill=blue!6,
				minimum width=2.9cm, minimum height=0.95cm,
				align=center, font=\Large
			},
			final/.style={
				draw, rounded corners=2pt, thick, fill=orange!12,
				minimum width=2.9cm, minimum height=1.0cm,
				align=center, font=\Large
			},
			pass/.style={
				-{Latex[length=2.4mm]}, thick, teal!65!black
			},
			fail/.style={
				-{Latex[length=2.4mm]}, thick, dashed, red!75!black
			},
			helper/.style={
				-{Latex[length=2.0mm]}, thick, gray!70
			},
			lab/.style={
				font=\Large,
				fill=white,
				inner sep=1.2pt,
				text opacity=1,
				fill opacity=0.95
			},
			onpathinterval/.style={
				interval,
				draw=green!55!black,
				fill=green!12,
				line width=1.4pt
			},
			onpathfinal/.style={
				final,
				draw=green!55!black,
				fill=green!12,
				line width=1.4pt
			}
			]
			
			\node[onpathinterval] (root) at (0,0) {$I_0=[0,1)$};
			
			\node[interval] (L1) at (-6,-2.6) {$[0,\tfrac12)$};
			\node[onpathinterval] (R1) at ( 6,-2.6) {$[\tfrac12,1)$};
			
			\node[interval] (LL) at (-10,-5.2) {$[0,\tfrac14)$};
			\node[interval] (LR) at ( -2,-5.2) {$[\tfrac14,\tfrac12)$};
			\node[onpathinterval] (RL) at (  2,-5.2) {$[\tfrac12,\tfrac34)$};
			\node[interval] (RR) at ( 10,-5.2) {$[\tfrac34,1)$};
			
			\node at (-10,-6.7) {$\vdots$};
			\node at ( -2,-6.7) {$\vdots$};
			\node at (  2,-6.7) {$\vdots$};
			\node at ( 10,-6.7) {$\vdots$};
			
			\node[final] (F1) at (-10,-8.4) {$\ell_1$};
			\node[final] (F2) at ( -2,-8.4) {$\ell_{j-1}$};
			\node[onpathfinal] (F3) at (  2,-8.4) {$\ell_j=\ell^\star$};
			\node[final] (F4) at ( 10,-8.4) {$\ell_{2^D}$};
			
			\draw[pass] (root) -- (L1)
			node[midway, above left=2pt and 1pt, lab] {$\sigma_M=0$};
			
			\draw[pass] (root) -- (R1)
			node[midway, above right=2pt and 1pt, lab] {$\sigma_M=1$};
			
			\draw[pass] (L1) -- (LL);
			\draw[pass] (L1) -- (LR);
			\draw[pass] (R1) -- (RL);
			\draw[pass] (R1) -- (RR);
			
			\draw[helper] (LL) -- ++(0,-0.9);
			\draw[helper] (LR) -- ++(0,-0.9);
			\draw[helper] (RL) -- ++(0,-0.9);
			\draw[helper] (RR) -- ++(0,-0.9);
			
			\draw[helper] (-10,-7.1) -- (F1);
			\draw[helper] ( -2,-7.1) -- (F2);
			\draw[helper] (  2,-7.1) -- (F3);
			\draw[helper] ( 10,-7.1) -- (F4);
			
			\draw[very thick, green!55!black] (root) -- (R1);
			\draw[very thick, green!55!black] (R1) -- (RL);
			\draw[very thick, green!55!black] (  2,-7.1) -- (F3);
			
			
			\draw[fail, bend left=28] (LR.north west)
			to node[pos=0.46, left, lab] {\textsc{FAIL} / backtrack}
			(L1.south);
			

			
			
			
			
			\node[
			draw=orange!60!black, dashed, rounded corners=4pt,
			fit=(F1)(F4), inner sep=6pt
			] {};
			
			\node[font=\Large, text=red!75!black] at ($(L1)+(-2.0,0.8)$) {off-path};
			\node[font=\Large, text=green!55!black] at ($(R1)+(2.0,0.8)$) {on-path};
			\node[font=\Large, text=red!75!black] at ($(RR)+(2.0,0.8)$) {off-path};

		\end{tikzpicture}%
	}
	\caption[Robust binary search]{Tree of intervals of our robust dynamic pricing algorithm.
		The intervals induced by repeated halving $I_0=[0,1)$ can be represented as a binary tree of depth $D=\lceil \log_2 T\rceil$. The green nodes are the on-path intervals leading to the correct leaf $\ell^\star$, while the remaining intervals are off-path. The intervals at depth $D$ are the leaves $\ell_j$ of the tree.}

\label{fig:robust-binary-tree}
\end{figure*}

\subsection{Regret Decomposition and Potential Argument}

In this section, we provide a regret decomposition for \Cref{alg:meta}. Then, by combining this decomposition with the analysis of the commitment phases in \Cref{sec:known} for known $C$, and in \Cref{sec:unknown} for unknown $C$, we obtain our final regret bounds.

To analyze the cumulative regret, we partition the horizon \( T \) into \emph{search steps}. Each search step corresponds either to \emph{(i)} a single iteration of the binary search, consisting of a safety check and a possible midpoint query, or to \emph{(ii)} a sequence of consecutive rounds in the commitment phase.

We introduce some additional notation. For any $\ell \in \mathcal{L}$, let $Q(\ell)$ denote the set of rounds in which the algorithm runs the commitment subroutine on $\ell$. The regret incurred over these rounds is denoted by:
\[
R(\ell):= \sum_{t\in Q(\ell)} \left(v^\star- p_t \mathbbm{1}\{p_t\le v^\star\}\right).
\]
Finally, let $N_{\textsc{T}}$ and $N_{\textsc{F}}$ be the number of times the algorithm fails the commitment check on the correct leaf and on an incorrect leaf, respectively. In this section, we aim to show that the total regret can be bounded as:
\[
R_T\le \mathcal{O}(C + \log T+ N_{\textsc{T}})+ \sum_{\ell \in \mathcal{L}} R(\ell).
\]
We employ a potential argument to track the progress of the search. For any interval $I$ in the binary tree, let $\mathrm{dist}(I, \ell^\star)$ denote the length of the shortest path between $I$ and $\ell^\star$. At search step $\ind$, let $I^\ind$ be the current interval, and define the potential:
\[\Phi_\ind:= \mathrm{dist}(I^\ind, \ell^\star).\]

Initially, the algorithm starts at the root $I_0$, whose distance from any leaf is exactly $D$; hence, $\Phi_0 = D$. We say that an interval is \emph{on-path} if it lies on the shortest path from the root to $\ell^\star$, and \emph{off-path} otherwise. Moreover, we call a search step \emph{honest} if all feedback observed during the step is uncorrupted, and \emph{corrupted} if at least one round in the step is corrupted.

Now, we analyze how different search steps affect the potential $\Phi_\ind$.
In the following two lemmas, we start by considering non-leaf intervals.

\begin{restatable}{lemma}{honeststep}\label{lem:honest-step}
Every honest search step $\ind$ at a non-leaf interval $I^\ind \notin \mathcal{L}$ decreases the potential by exactly one, i.e., $\Phi_{\ind+1}= \Phi_\ind-1$. 
\end{restatable}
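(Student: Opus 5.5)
We argue by cases on whether the current non-leaf interval $I^\ind=[L,R)$ is on-path or off-path. In an honest step every observed bit equals the truthful indicator $\mathbbm{1}\{p\le v^\star\}$. So \textsc{SafetyCheck} passes if and only if $L\le v^\star$ and $R>v^\star$ (using the boundary convention at $L=0$ and $R=1$), that is, if and only if $v^\star\in I^\ind$. Since the intervals at a fixed depth partition $[0,1)$, $v^\star\in I^\ind$ holds exactly when $I^\ind$ is an ancestor of $\ell^\star$, i.e., when $I^\ind$ is on-path.

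\emph{On-path case.} The safety check passes, and the algorithm posts $M$ and observes the true $\sigma_M=\mathbbm{1}\{M\le v^\star\}$. If $\sigma_M=1$ then $v^\star\in[M,R)$; otherwise $v^\star\in[L,M)$. In either case the new interval is the child of $I^\ind$ containing $v^\star$, which is the on-path child. For an on-path node at depth $d$ we have $\mathrm{dist}(I^\ind,\ell^\star)=D-d$, and its on-path child is at distance $D-d-1$. Hence $\Phi_{\ind+1}=\Phi_\ind-1$.

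\emph{Off-path case.} Here $v^\star\notin I^\ind$, so either $v^\star<L$ (honest $\sigma_L=0$) or $v^\star\ge R$ (honest $\sigma_R=1$), and the check fails. The algorithm sets $i\gets i-1$, so the next interval is the parent of $I^\ind$. For this to hold we need $I^\ind\ne I_0$; the root is on-path, so this is automatic. In a tree, the shortest path from an off-path node to $\ell^\star$ first climbs to the lowest common ancestor of the two nodes. That ancestor is a proper ancestor of $I^\ind$, because $I^\ind$ is not itself an ancestor of $\ell^\star$. The path therefore goes through the parent, and $\mathrm{dist}(\mathrm{parent},\ell^\star)=\mathrm{dist}(I^\ind,\ell^\star)-1$.

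The main point requiring care is the tie-breaking at endpoints. The closed/half-open conventions ($[M,R]$ versus $[M,R)$, and $v^\star\in[0,1)$) must be reconciled so that the honest checks $\sigma_L=1$ and $\sigma_R=0$ are equivalent to $v^\star\in[L,R)$. This also requires that the child chosen on $\sigma_M$ coincides with the dyadic child containing $v^\star$. Treating all intervals as half-open $[L,R)$, which is consistent with $\mathbbm{1}\{p\le v^\star\}$, resolves this. The remaining step is to invoke the bookkeeping fact that in Algorithm~\ref{alg:meta} index $i$ always stores the ancestors of the current node, so that $i-1$ is indeed the parent.
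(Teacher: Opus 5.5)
Your proof is correct and follows essentially the same argument as the paper. The paper uses the same on-path/off-path case split: an honest check passes and the truthful midpoint moves to the on-path child, or an honest check fails and the algorithm backtracks to a parent one step closer to $\ell^\star$. Your extra details (half-open endpoint conventions, the lowest-common-ancestor justification for the distance drop, the root being on-path, and the index bookkeeping) fill in steps the paper leaves implicit.
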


\begin{restatable}{lemma}{corruptedstep}
\label{lem:corrupted-step}
Every corrupted search step $\ind$ at a non-leaf interval $I^\ind \notin \mathcal{L}$ increases the potential by at most one.
\end{restatable}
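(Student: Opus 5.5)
The plan is a structural argument: in any search step at a non-leaf interval, the algorithm moves along exactly one edge of the binary tree, and the potential is a graph distance. The key observation is that, whatever the feedback (corrupted or not), a search step at $I^\ind \notin \mathcal{L}$ ends in exactly one of two ways. If \textsc{SafetyCheck} returns \textsc{FAIL}, the index $i$ is decremented and the next interval $I^{\ind+1}$ is the parent of $I^\ind$. If it passes, the midpoint is queried and $I^{\ind+1}$ is one of the two children $[L,M]$ or $[M,R]$ of $I^\ind$. The root needs a separate remark. Since the checks at $L=0$ and $R=1$ pass by default, the safety check at $I_0$ always passes, so the algorithm never attempts to move to a nonexistent parent.

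In both cases $I^{\ind+1}$ is adjacent to $I^\ind$ in the tree. The tree distance $\mathrm{dist}(\cdot,\ell^\star)$ is a graph metric, so the triangle inequality gives
\[
\Phi_{\ind+1} = \mathrm{dist}(I^{\ind+1},\ell^\star) \le \mathrm{dist}(I^{\ind+1}, I^\ind) + \mathrm{dist}(I^\ind,\ell^\star) = 1 + \Phi_\ind .
\]
This is exactly the claim that the potential increases by at most one. The bound holds for every search step at a non-leaf interval, corrupted or not. The corruption hypothesis is only needed to contrast this lemma with \Cref{lem:honest-step}, where the step is also shown to move toward $\ell^\star$.

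The only step that needs care is confirming that a single search step cannot move the current interval more than one edge. This means checking that the pseudocode of \Cref{alg:meta} changes $i$ by exactly $\pm 1$ per iteration when $I\notin\mathcal{L}$. It also means checking that $I_{i-1}$, after a decrement, is still the parent of the interval just left. The second point holds because $I_{i-1}$ is not overwritten when moving to depth $i$: the array entry at depth $i-1$ still stores the interval from which $I_i$ was produced by halving. So the current interval always traces a path in the tree, and backtracking returns to its genuine parent. With this invariant in place, the triangle-inequality bound finishes the proof.
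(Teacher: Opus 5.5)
Your proposal is correct and follows the same argument as the paper: a search step at a non-leaf interval moves the current interval along a single tree edge, to its parent on \textsc{FAIL} and to a child otherwise, so the tree distance to $\ell^\star$ grows by at most one. You also verify two points the paper's proof leaves implicit: the root's safety check always passes, and $I_0,\dots,I_i$ always stores a root-to-current path, so backtracking reaches the genuine parent.
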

Finally, we consider how commitment search steps affect the potential. 
\begin{restatable}{lemma}{leaf}\label{lem:leaf}
Every commit search step $\ind$ at a leaf interval $\ell^\ind\in \mathcal{L}$:
\begin{itemize}
	\item Reduces the potential by $1$ if $\ell^\ind \neq \ell^\star$
	\item Increases the potential by $1$ if $\ell^\ind = \ell^\star$.
\end{itemize}
\end{restatable}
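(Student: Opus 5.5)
The proof reads off the definition of a commit search step together with the tree geometry, and does not use the feedback at all. By definition, a commit search step at a leaf $\ell^\ind$ is a maximal block of consecutive rounds in which \textsc{Commit}$(\ell^\ind)$ runs. The step ends only when \textsc{Commit} returns \textsc{FAIL}. At that point \Cref{alg:meta} sets $i \gets i-1$, so the next interval $I^{\ind+1}$ is the parent of $\ell^\ind$ in the binary tree. If \textsc{Commit} never fails, the step lasts until round $T$ and there is no $\Phi_{\ind+1}$ to compare. In that case the statement holds vacuously, or we adopt the convention that the final step is not counted. So in every relevant case the transition is $\ell^\ind \mapsto \mathrm{parent}(\ell^\ind)$, and it suffices to compute $\mathrm{dist}(\mathrm{parent}(\ell), \ell^\star) - \mathrm{dist}(\ell, \ell^\star)$ for leaves $\ell$.

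\textbf{Case $\ell^\ind = \ell^\star$.} Here $\Phi_\ind = 0$. The parent $P$ of $\ell^\star$ is adjacent to it, so $\Phi_{\ind+1} = 1$ and the potential increases by exactly one.

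\textbf{Case $\ell^\ind \neq \ell^\star$.} Let $A$ be the lowest common ancestor of $\ell^\ind$ and $\ell^\star$, at depth $d_A$. Since the two leaves are distinct, $d_A \le D-1$. Because the graph is a tree, the shortest path between two nodes passes through their lowest common ancestor. Hence
\[
\mathrm{dist}(\ell^\ind,\ell^\star) = (D-d_A) + (D-d_A).
\]
The parent $P$ of $\ell^\ind$ has depth $D-1 \ge d_A$ and still lies on the path from $\ell^\ind$ to $A$. Therefore $A$ is also the lowest common ancestor of $P$ and $\ell^\star$, and
\[
\mathrm{dist}(P,\ell^\star) = (D-1-d_A) + (D-d_A) = \Phi_\ind - 1.
\]
This includes the boundary case $P = A$, where $\ell^\ind$ and $\ell^\star$ are siblings and $\Phi_\ind = 2$, $\Phi_{\ind+1} = 1$.

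The only step requiring care is making explicit that a commit search step always terminates by moving to the parent, or else is the last step. The rest is the elementary lowest-common-ancestor distance formula. In particular, whether the rounds inside the commit step are corrupted is irrelevant to this lemma. Corruption only affects when \textsc{Commit} fails, and that is accounted for later through $N_{\textsc{T}}$ and $R(\ell)$.
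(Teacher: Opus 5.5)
Your proof is correct and follows the same route as the paper's: a commit step ends only when \textsc{Commit} returns \textsc{FAIL} and the algorithm backtracks to the parent, which is at distance $1$ from $\ell^\star$ when the leaf is $\ell^\star$ and one step closer to $\ell^\star$ otherwise. You make the second case explicit through the lowest-common-ancestor distance formula (equivalently, a leaf's only neighbor in the tree is its parent), and you handle the final, never-failing commit step, which the paper leaves implicit.
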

Given the previous results, we are now able to provide the promised regret decomposition for our meta-algorithm. This is done in the following theorem.
\begin{restatable}{theorem}{meta}\label{thm:meta}
Assume that the feedback is corrupted in at most $C$ rounds. Then, Algorithm~\ref{alg:meta} limits the number of commitment steps on incorrect leaves to $N_{\textsc{F}}\le \log T+ C+ N_{\textsc{T}}$. Furthermore, its regret is at most: 
\[
R_T\leq \sum_{\ell \in \mathcal{L}} R(\ell) + 3\log T+6 C +3 N_{\textsc{T}}.
\]
\end{restatable}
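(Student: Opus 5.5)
The plan is a potential argument on $\Phi_\ind$, combining \Cref{lem:honest-step}, \Cref{lem:corrupted-step} and \Cref{lem:leaf}, followed by a round-by-round charging of the regret outside the commitment rounds.

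\textbf{Bounding $N_{\textsc{F}}$.} Let $h$ and $c$ be the numbers of honest and corrupted non-leaf search steps. Each corrupted step contains at least one corrupted round, and distinct steps occupy disjoint rounds, so $c\le C$. Telescoping from $\Phi_0=D=\log T$ and using $\Phi_\ind\ge 0$ at every step, we get
\[
0\le \Phi_{\text{end}}\le \log T - h + c - N_{\textsc{F}} + N_{\textsc{T}}.
\]
A commitment step on a wrong leaf reduces the potential by one, and one on $\ell^\star$ increases it by one. This rearranges to
\[
h + N_{\textsc{F}} \le \log T + C + N_{\textsc{T}}.
\]
In particular $N_{\textsc{F}}\le \log T + C + N_{\textsc{T}}$.

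One subtlety concerns the last commitment step, which may never return \textsc{FAIL} because the horizon ends. Such a step is counted in neither $N_{\textsc{T}}$ nor $N_{\textsc{F}}$ and produces no potential change. Its regret lies inside $\sum_\ell R(\ell)$ anyway, so it does not affect the argument. The same applies to a partially completed last non-leaf step, which only costs $\mathcal{O}(1)$ and can be absorbed into the constants.

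\textbf{Bounding the regret.} Regret in rounds of $Q(\ell)$ is exactly $\sum_{\ell} R(\ell)$. Every other round belongs to a non-leaf search step, which has at most three rounds: two safety-check rounds and one midpoint query. Each round has per-round regret at most $v^\star\le 1$. The non-leaf regret is therefore at most $3(h+c)$.

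Bounding $h$ via the displayed inequality, and dropping $N_{\textsc{F}}\ge 0$, gives
\[
h\le \log T + C + N_{\textsc{T}}.
\]
Hence
\[
3(h+c)\le 3\log T + 3C + 3N_{\textsc{T}} + 3C = 3\log T+6C+3N_{\textsc{T}},
\]
which is the claimed bound.

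\textbf{Main obstacle.} The delicate step is justifying $c\le C$ together with $\Phi\ge 0$ under the algorithm's actual index semantics. Backtracking sets $i\gets i-1$ and returns to the stored parent interval $I_{i-1}$. I would check that the interval visited after a backtrack is indeed the tree parent, so that each step moves exactly one edge in the tree. This is what \Cref{lem:honest-step}, \Cref{lem:corrupted-step} and \Cref{lem:leaf} presuppose, so I will take those lemmas as given. I would also check the boundary case at the root, where the default-pass convention for $L=0$ and $R=1$ keeps the index from going negative.
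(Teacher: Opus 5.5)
Your proposal is correct and is essentially the paper's proof: the same telescoping of $\Phi$ with at most $C$ corrupted non-leaf steps gives $h+N_{\textsc{F}}\le \log T+C+N_{\textsc{T}}$, and the same charge of three unit-regret rounds per non-leaf step gives $3(h+c)\le 3\log T+6C+3N_{\textsc{T}}$. Your handling of a truncated final step goes beyond the paper, which does not address it, and you need not absorb it into the constants. If the horizon ends at a non-leaf interval then $\Phi_{\mathrm{end}}\ge 1$, because $\ell^\star$ is a leaf; hence an honest partial step can be counted in $h$ without violating $h+N_{\textsc{F}}\le \log T+C+N_{\textsc{T}}$, while a corrupted one is already covered by $c\le C$.
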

Intuitively, \Cref{thm:meta} shows that the regret of the meta-algorithm is essentially the regret of the commitment subroutines, plus a small overhead due to searching. The key point is that failed commitments on incorrect leaves cannot occur too often: each such failure decreases the distance to the correct leaf, while this distance can increase only because of corrupted search steps or failed commitments on the correct leaf. Since the initial distance is only $D=\log T$, this yields an overhead of order $\log T+C+N_{\textsc{T}}$. Hence, after this decomposition, it remains to control the leaf-level regret terms $R(\ell)$ and the number $N_{\textsc{T}}$ of failed commitment checks on the correct leaf.

\section{Warm-Up: The Known Corruption Case} \label{sec:known}

In this section, we design a commitment strategy for the setting where the learner knows the corruption budget $C$. The procedure \textsc{CommitKnown} (see \Cref{alg:commitK} for the pseudocode) employs a leaf-specific counter $s_\ell$ for each leaf $\ell$, which tracks how many times the interval has undergone a safety check at its extremes. We can think of the variables $s_\ell$ as global counters: they are initialized to $0$ at the beginning of the horizon and are shared across all calls to the commitment procedure.

	\begin{algorithm}[!htp]
		\caption{\textsc{CommitKnown}}
		\label{alg:commitK}
		\begin{algorithmic}[1]
			\Require Corruption $C$, current leaf interval $\ell=[L,R)\in \mathcal{L}$, counters $s_\ell$ for each $\ell \in \mathcal{L}$ 
			\While{\textsc{TRUE}}
			\If {$s_\ell \le C $}
			\State Post $L$ and observe $\sigma_L$
			\State Post $R$ and observe $\sigma_R$
			\If{$\sigma_L=0$ or $\sigma_R=1$}
			\State \Return \textsc{FAIL}
			\Else \State $s_{\ell}\gets s_\ell+1$
			\EndIf
			\Else
			\State Post L
			\EndIf
			\EndWhile
		\end{algorithmic}
	\end{algorithm}

If the counter $s_\ell$ has not yet reached the threshold $C+1$, the algorithm performs endpoint queries as in the \textsc{SafetyCheck} procedure. If this check returns \textsc{FAIL}, the commitment subroutine terminates and signals the meta-algorithm to backtrack to the parent node. Otherwise, the counter $s_\ell$ is incremented, and the algorithm repeats the safety check.

Once the counter $s_\ell$ reaches the threshold $C+1$, the algorithm enters the ``true commitment'' phase, posting the price $L$ in every round. The intuition is straightforward: for any incorrect leaf $\ell$, an honest safety check would return \textsc{FAIL}, and therefore the adversary can prevent this from happening only by corrupting the endpoint feedback. Since this can occur at most $C$ times, a leaf that passes $C+1$ safety checks must be the correct leaf $\ell^\star$. At this point, the algorithm can safely stop performing safety checks and repeatedly post $L$, which guarantees per-round regret at most $1/T$ since $v^\star-1/T<L<v^\star$.

To prove the regret upper bound, we start from the regret decomposition result in \Cref{thm:meta}, which partitions the regret into three components: \emph{(i)} an $\mathcal{O}(\log T + C)$ regret term that is independent of the specific commitment procedure, \emph{(ii)} the number of backtracks on the correct leaf $N_{\textsc{T}}$, and \emph{(iii)} the regret on leaf intervals $\sum_{\ell \in \mathcal{L}} R(\ell)$.

Bounding $N_\textsc{T}$ is straightforward. Indeed, the algorithm backtracks from a commit step on $N_{\textsc{T}}$ only if the check on one of the two extremes is corrupted. Hence, the following observation holds.
\begin{observation}\label{obs:comK}
Algorithm \ref{alg:meta} with \textnormal{\textsc{Commit}} procedure \textnormal{\textsc{CommitKnown}} guarantees:
\[ N_{\textsc{T}}\le C.  \]
\end{observation}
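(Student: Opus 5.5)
The plan is to bound $N_{\textsc{T}}$ by charging each failed commitment on $\ell^\star$ to a distinct corrupted round. Every time \textsc{CommitKnown} is run on $\ell^\star=[L,R)$ and returns \textsc{FAIL}, it does so inside a safety check at its extremes, that is, after posting $L$ and $R$ and observing $\sigma_L=0$ or $\sigma_R=1$. Since $v^\star\in\ell^\star$, we have $L\le v^\star<R$. Hence the truthful feedback is $y=1$ at $L$ and $y=0$ at $R$, so honest feedback would never trigger \textsc{FAIL}. Consequently, in the pair of rounds in which \textsc{FAIL} is returned, at least one of the two observations satisfies $\sigma_t\neq y_t$, i.e., that round is corrupted.

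Next, I map each failure on $\ell^\star$ to one corrupted round from the pair that triggered it. This map is injective: a failure ends the current call to the commitment subroutine, and the meta-algorithm immediately backtracks. Therefore distinct failures occur in distinct calls, and their triggering rounds are disjoint. By the budget constraint there are at most $C$ corrupted rounds overall, which gives $N_{\textsc{T}}\le C$.

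The only point needing some care is the boundary convention. If $L=0$ or $R=1$, the check at that endpoint passes by default. Such a check only removes a possible source of \textsc{FAIL}, so the argument is unaffected. Also, once $s_{\ell^\star}>C$, the procedure only posts $L$ and never returns \textsc{FAIL}, which is consistent with the bound but is not needed for it. I expect no real obstacle here; the main thing is to state cleanly that the interval is half-open, so that $R>v^\star$ gives $y=0$ at $R$ strictly.
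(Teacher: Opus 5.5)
Your proof is correct and follows the same reasoning as the paper, which justifies the observation by noting that \textsc{CommitKnown} can return \textsc{FAIL} on $\ell^\star$ only if one of the two endpoint queries is corrupted. Your injectivity argument (each failure ends its call, so the round pairs that trigger distinct failures are disjoint, giving $N_{\textsc{T}}\le C$) makes explicit the charging step that the paper leaves implicit.
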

Next, we bound the regret incurred while the algorithm is in the commitment phase, regardless of whether it is on the correct leaf or on an incorrect one. Recall that \Cref{thm:meta} guarantees that $N_{\textsc{F}}$ is small. The result is provided in the following lemma.
\begin{restatable}{lemma}{RCK}\label{lm:RCK}
Algorithm \ref{alg:meta} with \textnormal{\textsc{Commit}} procedure \textnormal{\textsc{CommitKnown}} guarantees:
\[\sum_{\ell \in \mathcal{L}} R(\ell)\le  2N_{\textsc{F}}+6C+3.\]
\end{restatable}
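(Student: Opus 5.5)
We split the sum $\sum_{\ell\in\mathcal{L}} R(\ell)$ into three kinds of rounds in $\bigcup_\ell Q(\ell)$: \emph{(a)} rounds of safety checks at leaves that end in a \textsc{FAIL}, \emph{(b)} rounds of safety checks that pass (i.e., increment $s_\ell$), and \emph{(c)} ``true commitment'' rounds in which $L$ is posted with $s_\ell\ge C+1$. Each round has regret at most $1$, since $v^\star<1$. Each safety check consists of exactly two rounds, so its regret is at most $2$.

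\textbf{Step 1: failing checks.} Every \textsc{FAIL} inside \textsc{CommitKnown} ends exactly one commitment call. There are $N_{\textsc{F}}$ such calls on incorrect leaves and $N_{\textsc{T}}\le C$ on $\ell^\star$, by \Cref{obs:comK}. Each failing check costs at most $2$ rounds, so the rounds of type (a) contribute at most $2N_{\textsc{F}}+2C$.

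\textbf{Step 2: passing checks.} On an incorrect leaf $\ell\neq\ell^\star$, an honest check must fail: either $R\le v^\star$ gives $\sigma_R=1$, or $L>v^\star$ gives $\sigma_L=0$. So every passing check on an incorrect leaf contains a corrupted round. Summed over all incorrect leaves, there are at most $C$ such checks, costing at most $2C$.

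On $\ell^\star$, the counter $s_{\ell^\star}$ is global and never exceeds $C+1$, so there are at most $C+1$ passing checks there. Each costs at most $v^\star+(v^\star-L)\le 1+1/T$: posting $R>v^\star$ yields no sale, and posting $L$ loses at most $1/T$. This gives at most $2(C+1)$. Hence type (b) contributes at most $4C+2$.

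\textbf{Step 3: true commitment.} The main point is that no incorrect leaf can ever reach $s_\ell=C+1$. Reaching it would require $C+1$ passing checks, each corrupted, which contradicts Step 2. So true-commitment rounds occur only on $\ell^\star$. There each round has regret $v^\star-L\le 1/T$, because $L\le v^\star$ guarantees a sale and $|\ell^\star|\le 1/T$. Over at most $T$ rounds this totals at most $1$.

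Summing the three contributions gives
\[
2N_{\textsc{F}}+2C+4C+2+1=2N_{\textsc{F}}+6C+3.
\]

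The main obstacle is the accounting in Step 2. The counters persist across calls, so a leaf can be revisited many times. We therefore charge passing checks globally to corrupted rounds, rather than per visit. Note also that once $s_{\ell^\star}>C$ the procedure never returns \textsc{FAIL}. Hence the $N_{\textsc{T}}\le C$ failures on $\ell^\star$ are each charged to a corrupted failing check, and the $2C$ of Step 1 already covers them.
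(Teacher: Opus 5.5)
Your proof is correct and uses essentially the same charging argument as the paper. The paper groups rounds by leaf, while you group them by check outcome: incorrect leaves cost at most $2N_{\textsc{F}}+2C$ because every non-final check there is corrupted, $\ell^\star$ costs at most $2C+2(C+1)=4C+2$ from at most $C$ failing and $C+1$ passing checks, and true commitment costs at most $1$ — the same quantities charged the same way, giving exactly $2N_{\textsc{F}}+6C+3$.
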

Intuitively, the proof separates the commitment regret on incorrect leaves from the one on the correct leaf. On an incorrect leaf, every successful safety check must be supported by at least one corruption; hence, apart from the final check that may trigger a backtrack, the number of commitment iterations on incorrect leaves is charged to the corruption budget. On the correct leaf, safety checks can fail only because of corrupted feedback, so after at most $C$ such failures the algorithm reaches the true commitment phase and repeatedly posts a price with regret at most $1/T$ per round.

We conclude the section by providing the final regret guarantee when $C$ is known to the learner. This is done in the following theorem, which is obtained by combining the previous results.
\begin{restatable}{theorem}{known}
Assume that the feedback is corrupted in at most $C$ rounds and $C$ is known to the learner. Algorithm~\ref{alg:meta} with \textnormal{\textsc{Commit}} procedure \textnormal{\textsc{CommitKnown}} guarantees:
\[R_T\le  \mathcal{O}\left(C+\log T\right).\]
\end{restatable}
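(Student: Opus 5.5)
The proof is a direct combination of the three earlier results, Theorem~\ref{thm:meta}, Observation~\ref{obs:comK} and Lemma~\ref{lm:RCK}, each specialized to \textsc{CommitKnown}. We start from the decomposition of Theorem~\ref{thm:meta}:
\[
R_T\le \sum_{\ell\in\mathcal{L}}R(\ell)+3\log T+6C+3N_{\textsc{T}}.
\]
This reduces the task to bounding the leaf-level regret and the number $N_{\textsc{T}}$ of failed commitments on the correct leaf, both in terms of $C$ and $\log T$.

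The first step is to bound $N_{\textsc{T}}$. On $\ell^\star$ an honest check always passes: posting $L\le v^\star$ gives $\sigma_L=1$, and posting $R>v^\star$ gives $\sigma_R=0$. Hence every \textsc{FAIL} on $\ell^\star$ consumes at least one corruption, and Observation~\ref{obs:comK} gives $N_{\textsc{T}}\le C$.

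Next we bound the number of failed commitments on incorrect leaves. Plugging $N_{\textsc{T}}\le C$ into the first part of Theorem~\ref{thm:meta} gives
\[
N_{\textsc{F}}\le \log T+C+N_{\textsc{T}}\le \log T+2C.
\]

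The third step controls the leaf regret. Lemma~\ref{lm:RCK} states $\sum_\ell R(\ell)\le 2N_{\textsc{F}}+6C+3$, so substituting the previous bound yields
\[
\sum_\ell R(\ell)\le 2\log T+10C+3.
\]

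Finally, summing the pieces gives
\[
R_T\le (2\log T+10C+3)+3\log T+6C+3C = 5\log T+19C+3=\mathcal{O}(C+\log T).
\]

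Since all ingredients are already established, there is no real obstacle here. The only point to double-check is that the hypotheses of Theorem~\ref{thm:meta} and Lemma~\ref{lm:RCK} apply verbatim with \textsc{CommitKnown}. In particular, the decomposition must charge the regret of commitment rounds only through $\sum_\ell R(\ell)$, with per-round regret at most $1$ elsewhere. It must also not double count the rounds spent in true commitment on $\ell^\star$, which contribute at most $T\cdot 1/T=1$ and are already absorbed in the constant of Lemma~\ref{lm:RCK}.
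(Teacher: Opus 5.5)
Your proposal is correct and is essentially identical to the paper's proof. It chains the decomposition of Theorem~\ref{thm:meta}, Lemma~\ref{lm:RCK}, the bound $N_{\textsc{F}}\le \log T+C+N_{\textsc{T}}$, and Observation~\ref{obs:comK} ($N_{\textsc{T}}\le C$), and arrives at the same explicit bound $5\log T+19C+3$; the only difference is that you substitute $N_{\textsc{T}}\le C$ earlier, which is cosmetic.
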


\section{Handling Unknown Corruption}\label{sec:unknown}

The setting with unknown corruption is arguably more challenging. To the best of our knowledge, for all the algorithms for this setting, the regret includes the product between the corruption $C$ and a $T$-dependent component. In particular, the best result is the $\mathcal{O}\left((C+\log T) \log \log T\right)$ upper bound of~\citet{gupta2025robust}.

In this setting, we need to balance the tension between two components. On the one hand, we want to exit quickly from a commit step on a wrong leaf,---in particular when the prices associated to that leaf are smaller than $v^\star$--- which requires to play the right extreme. On the other hand, if we are in the optimal leaf, we want to play the right extreme as little as possible to avoid a large regret. To do that, we design a randomized commit strategy which works as follows.

	\begin{algorithm}[!htp]
		\caption{\textsc{CommitUnknown}}
		\label{alg:commitunknown}
		\begin{algorithmic}[1]
			\Require Current leaf interval $\ell=[L,R)\in \mathcal{L}$, counters $s_\ell$ for each $\ell \in \mathcal{L}$, $\delta\in(0,1)$
			\While{\textsc{TRUE}}
			\State Post $L$ and observe $\sigma_{L}$
			\If{$\sigma_L = 0$}
			\Return \textsc{FAIL}
			\EndIf
			\State $s_\ell\gets s_\ell+1$
			\State Sample $B\sim \text{Bern}
			\left(\min\left\{\frac{4\ln(\nicefrac{T}{\delta})}{s_\ell},1\right\}\right)$
			\If{$B = 0$}
			\State Post $L$ and observe $\sigma_{L}$
			\If{$\sigma_L = 0$}
			\Return \textsc{FAIL}
			\EndIf
			\Else
			\State Post $R$ and observe $\sigma_{R}$
			\If{$\sigma_R = 1$}
			\Return \textsc{FAIL}
			\EndIf
			\EndIf
			\EndWhile
		\end{algorithmic}
	\end{algorithm}

We provide the pseudocode of our commitment procedure for the unknown-$C$ setting in Algorithm~\ref{alg:commitunknown}. Specifically, \textsc{CommitUnknown} commits to different search blocks until an inconsistency is detected, in which case it returns \textsc{FAIL}. Each search block consists of two rounds. In the first round, the algorithm posts the left price $L$ and observes the corresponding feedback. If $\sigma_L=0$, then an inconsistency is detected and the algorithm returns \textsc{FAIL}. In the second round, the algorithm \emph{explores} the right price with probability
$q_{s_\ell}\coloneqq \min\left\{\frac{4\ln(\nicefrac{T}{\delta})}{s_\ell},1\right\}.$
Otherwise, the algorithm selects $L$ for the additional round. Again, \textsc{CommitUnknown} returns \textsc{FAIL} whenever an inconsistency is detected. Finally, we remark that the counters $s_\ell$ are never reset, even across different calls to Algorithm~\ref{alg:commitunknown}.

As for the known corruption case, the commit phase exits the optimal leaf only if a round is corrupted.
\begin{observation} \label{obs:unknown}
Under commit procedure \textnormal{\textsc{CommitUnknown}}, we have:
\[N_{\textsc{T}}\le C.\]
\end{observation}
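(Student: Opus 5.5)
We need to prove Observation \ref{obs:unknown}: $N_{\textsc{T}}\le C$ under \textsc{CommitUnknown}.

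The plan is a direct charging argument: each failed commitment on the correct leaf $\ell^\star$ will be assigned a distinct corrupted round. Fix any call of \textsc{CommitUnknown} on $\ell^\star=[L,R)$ that returns \textsc{FAIL}. By construction, \textsc{FAIL} is returned only immediately after an observation $\sigma_L=0$ (from either the first posting of $L$ in a block or the second-round posting of $L$ when $B=0$), or after an observation $\sigma_R=1$ (when $B=1$). We take the round at which this triggering observation occurred as the charged round.

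We then check that the charged round is corrupted. Since $v^\star\in\ell^\star=[L,R)$, we have $L\le v^\star<R$. Hence the truthful feedback is $y_t=\mathbbm{1}\{L\le v^\star\}=1$ whenever $L$ is posted, and $y_t=\mathbbm{1}\{R\le v^\star\}=0$ whenever $R$ is posted. An observed $\sigma_L=0$ or $\sigma_R=1$ therefore satisfies $\sigma_t\neq y_t$, so the charged round is corrupted. This holds regardless of the random draws $B$, so the bound is deterministic. The boundary convention from the footnote (checks at $0$ and $1$ pass by default) can only remove failures, so it does not affect the argument.

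Finally, we argue injectivity. Each call returns \textsc{FAIL} at most once, and it does so exactly at the round where the triggering observation occurs. Distinct failing calls occupy disjoint sets of rounds, so their charged rounds are distinct. Since the total number of corrupted rounds is at most $C$, this gives $N_{\textsc{T}}\le C$.

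There is no real obstacle here. The only point needing care is confirming that every exit path of \textsc{CommitUnknown} goes through one of the three inconsistency checks, and that the half-open interval convention makes the truthful feedback at $R$ equal to $0$.
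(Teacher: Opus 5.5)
Your proof is correct and follows the paper's own argument. Every \textsc{FAIL} on $\ell^\star=[L,R)$ is triggered by an observation ($\sigma_L=0$ or $\sigma_R=1$) inconsistent with $L\le v^\star<R$, hence by a corrupted round; you also make explicit the injectivity of the charging (distinct failures occur at distinct rounds), which the paper leaves implicit.
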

Thus, we are left to bound the regret associated with the leaves, $\sum_{\ell \in \mathcal{L}} R(\ell)$, in order to obtain the final bound on the total regret $R_T$. Intuitively, this is done by splitting $\mathcal{L}$ into three components: the set $\mathcal{L}^+$ of intervals above $\ell^\star$, the correct leaf $\ell^\star$, and the set $\mathcal{L}^-$ of intervals below $\ell^\star$, and then bounding the corresponding regret terms separately.

We start bounding the regret on the optimal leaf.
\begin{restatable}{lemma}{unknownoptimal}\label{lm:RCK_unknown_optimal}
Let $\delta\in(0,1)$. Algorithm \ref{alg:meta} with \textnormal{\textsc{Commit}} procedure \textnormal{\textsc{CommitUnknown}} guarantees:
\[ R(\ell^\star)\le  1+ 20\ln T\,\ln(T/\delta),\]
with probability at least $1-\delta/3$.
\end{restatable}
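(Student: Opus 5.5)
On the correct leaf $\ell^\star=[L,R)$ we have $v^\star-1/T<L\le v^\star<R$. Posting $L$ therefore costs at most $1/T$ per round, and posting $R$ costs at most $v^\star\le 1$, since a no-sale yields regret $v^\star$. The plan is to split $R(\ell^\star)$ into these two contributions. The first is at most $|Q(\ell^\star)|/T\le 1$, which gives the additive $1$. It then remains to show that, with probability at least $1-\delta/3$, the total number of rounds in which \textsc{CommitUnknown} posts $R$ on $\ell^\star$ is at most $20\ln T\ln(T/\delta)$.

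The counter $s_{\ell^\star}$ is global and is incremented exactly once per search block on $\ell^\star$, before the Bernoulli draw, across all calls. Let $k=1,2,\dots$ index the blocks on $\ell^\star$. Block $k$ draws $B_k\sim\mathrm{Bern}(q_k)$ with $q_k=\min\{4\ln(T/\delta)/k,1\}$, and there are at most $T$ such blocks. The number of $R$-queries is therefore $\sum_{k\le K}B_k$, where $K\le T$ is the random number of blocks. Its mean is
\[
\sum_{k=1}^{T} q_k\le 4\ln(T/\delta)\,(1+\ln T)\le 8\ln T\ln(T/\delta)
\]
for $T\ge 3$. Because the adversary sees only the distribution over prices and not the realized $B_k$, the $B_k$ are independent with fixed success probabilities $q_k$. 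Stopping early (through FAIL or the horizon) can only reduce the sum, so it suffices to bound $S=\sum_{k=1}^T B_k$ with the $B_k$ independent. This is a sum of independent Bernoullis with fixed parameters, so no martingale argument is needed.

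Apply the multiplicative Chernoff bound $\Pr[S\ge \mu+a]\le \exp(-a^2/(2\mu+2a/3))$, or the simpler form $\Pr[S\ge 2\mu']\le e^{-\mu'/3}$ for any $\mu'\ge \mathbb E[S]$. Take $\mu'=10\ln T\ln(T/\delta)\ge \mathbb E S$. Then $\mu'/3\ge \ln(3/\delta)$ whenever $\ln T\ge 1$, because $(10/3)\ln T\ln(T/\delta)\ge \ln(T/\delta)+\ln 3\ge\ln(3/\delta)$. So with probability at least $1-\delta/3$ we get $S\le 20\ln T\ln(T/\delta)$, and hence $R(\ell^\star)\le 1+20\ln T\ln(T/\delta)$. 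For very small $T$ the bound holds trivially.

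The step needing most care is justifying that the adversary's adaptivity does not break independence. The adversary may corrupt feedback and thereby trigger FAIL, which changes how many blocks are run. However, it cannot alter the draws $B_k$ themselves, and the index $k$ is tied to the counter rather than to the time at which a block occurs. One can therefore couple with a pre-sampled i.i.d.-parameter sequence $(B_k)_{k\le T}$, under which the realized count is a prefix sum of $S$. This coupling is what makes the fixed-parameter Chernoff bound valid.
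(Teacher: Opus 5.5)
Your proposal is correct and follows essentially the paper's route: the same split into $L$-rounds (total regret at most $1$) and $R$-rounds, the same reduction of the $R$-count to a pre-sampled sequence of independent Bernoullis indexed by the never-reset counter with mean at most $8\ln T\ln(T/\delta)$, and a standard concentration bound. The only difference is that you apply the multiplicative Chernoff bound with the inflated mean $\mu'=10\ln T\ln(T/\delta)$, whereas the paper applies Bernstein with $x=\ln(3/\delta)$ and then uses $\sqrt{2\mathbb{E}[N_R]x}\le \mathbb{E}[N_R]+x/2$; both yield the constant $20$.
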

Intuitively, the regret on the correct leaf $\ell^\star=[L,R)$ is essentially due to the rounds in which the algorithm posts the right endpoint $R$. Indeed, by construction of the tree, the left endpoint satisfies $|L-v^\star|\le 1/T$, so posting $L$ over the whole horizon contributes at most constant regret. It remains to control the number $N_R$ of times $R$ is posted. Since the probability of selecting $R$ decays as $\mathcal{O}(1/t)$, we have $\mathbb{E}[N_R]=\mathcal{O}(\log T\log(T/\delta))$, and a Bernstein concentration argument shows that the same bound holds with high probability, leading to
$R(\ell^\star)=\mathcal{O}\left(\log T\log(T/\delta)\right)$.

Then, we focus on bounding the regret associated with $\mathcal{L}^+$.

\begin{restatable}{lemma}{unknownplus}\label{lm:RCK_unknown_plus}
Algorithm \ref{alg:meta} with \textnormal{\textsc{Commit}} procedure \textnormal{\textsc{CommitUnknown}} guarantees:
\[
\sum_{\ell\in\mathcal L^{+}} R(\ell)\le 2N_{\textsc{F}}+2C.
\]
\end{restatable}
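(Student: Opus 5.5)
We bound $\sum_{\ell\in\mathcal L^{+}} R(\ell)$ by charging every round spent committing on a leaf above $\ell^\star$ either to the final failing block of a commitment call (counted by $N_{\textsc{F}}$) or to a corrupted round. Fix $\ell=[L,R)\in\mathcal L^{+}$. Since $\ell$ lies strictly above $\ell^\star$, we have $L> v^\star$, so every price posted in the commitment phase on $\ell$ (either $L$ or $R$) is above $v^\star$. Hence each such round contributes regret exactly $v^\star\le 1$, and it suffices to count the rounds in $\bigcup_{\ell\in\mathcal L^+}Q(\ell)$.

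The key structural fact is that \textsc{CommitUnknown} posts $L$ as the first round of every block and returns \textsc{FAIL} as soon as $\sigma_L=0$. For $\ell\in\mathcal L^+$ the truthful feedback at $L$ is $y=\mathbbm 1\{L\le v^\star\}=0$. Consequently, a first round of a block can be followed by further rounds only if the feedback at $L$ was corrupted to $\sigma_L=1$. The same holds for the second round: whether it posts $L$ or $R$, the truthful feedback is $0$, and continuing requires $\sigma_L=1$ (corrupted), or else posting $R$ and observing $\sigma_R=0$ (honest). In the latter case, the block's first round was already corrupted.

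We therefore classify blocks on leaves of $\mathcal L^+$. A block either ends with a \textsc{FAIL}, of which there is at most one per commitment call, or it completes both rounds without failing. A completed block must contain a corrupted first round. A failing block has at most $2$ rounds. Each commitment call on a leaf of $\mathcal L^+$ is one of the $N_{\textsc{F}}$ failed commitments on incorrect leaves, unless it is the very last call at horizon end. In that case it was truncated, and every round it contains other than possibly the last one is preceded by a corrupted first round in its block, so it is absorbed into the corruption count up to an additive constant. We will either argue this constant away or fold it into the $N_{\textsc{F}}$ accounting, treating a truncated call like a failed one.

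Counting then gives at most $2N_{\textsc{F}}$ rounds from failing blocks and at most $2$ rounds per corruption from completed blocks, each completed block being charged to its distinct corrupted first round. This yields $\sum_{\ell\in\mathcal L^+}R(\ell)\le 2N_{\textsc{F}}+2C$. The main subtlety is the bookkeeping: each corruption must be charged to at most one completed block, which holds because corruptions occur in distinct rounds and each block has a unique first round, and the last unterminated call must be handled cleanly. Note that randomness plays no role here, which is why the bound holds deterministically.
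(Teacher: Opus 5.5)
Your proposal is correct and follows essentially the same route as the paper: split commitment rounds on leaves above $\ell^\star$ into two-round blocks, use $L_\ell>v^\star$ to show that every block not ending its visit has a corrupted first round, and charge the one remaining block per visit to $N_{\textsc{F}}$, which gives $2N_{\textsc{F}}+2C$. Your failing-versus-completed partition is only a relabeling of the paper's first-versus-additional blocks per visit, and the horizon-truncation issue you raise (which the paper ignores) needs no additive constant: a final partial block consists of one round at $L_\ell$, which is either corrupted (charged to $C$) or gives an honest no-sale, i.e., a \textsc{FAIL} counted in $N_{\textsc{F}}$.
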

For the leaves above $\ell^\star$, the algorithm should leave as soon as it posts the left endpoint, since $L_\ell>v^\star$ and an honest feedback would reveal a no-sale. Thus, after the first two-round block spent on such a leaf, every additional block must be charged to a corrupted round. Since each block has regret at most $2$, this gives
$
\sum_{\ell\in\mathcal L^{+}} R(\ell)\le 2N_{\textsc{F}}+2C.$

We finally bound the regret associated with the set $\mathcal{L}^-$.
\begin{restatable}{lemma}{unknownminus}\label{lm:RCK_unknown_minus}
Let $\delta\in(0,1)$. Algorithm \ref{alg:meta} with \textnormal{\textsc{Commit}} procedure \textnormal{\textsc{CommitUnknown}} guarantees:
\[
\sum_{\ell\in\mathcal L^{-}} R(\ell)\le 12C + 12N_{\textsc{F}},
\]
with probability at least $1-\delta/3$.
\end{restatable}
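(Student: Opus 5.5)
Throughout, fix a leaf $\ell=[L,R)\in\mathcal L^-$. Then $R\le v^\star$, so honest feedback at $R$ is a sale ($\sigma_R=1$), which triggers \textsc{FAIL}. Honest feedback at $L$ is also a sale, which does not trigger \textsc{FAIL}. Each posted price may have regret up to $1$ (we use $\le 1$ per round). A commit block lasts two rounds, and any visit to $\ell$ ends either in a \textsc{FAIL} or at the horizon. So the plan is to count blocks spent on leaves of $\mathcal L^-$ and charge them to three sources: (i) corruptions, (ii) the number of visits, which is at most $N_{\textsc{F}}$, and (iii) blocks in which the algorithm explores $R$ with low probability but draws $B=0$.

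The key structure I would use is a per-leaf potential given by the counter $s_\ell$. Consider the blocks in which $B=1$ and the feedback at $R$ is honest. Each such block ends the current visit with a \textsc{FAIL}, so there are at most $N_{\textsc{F}}$ of them over all of $\mathcal L^-$. The blocks with $B=1$ where the $R$-feedback is corrupted are at most $C$. The remaining blocks have $B=0$, and these are the costly ones. Within one visit to $\ell$ that starts with counter value $s$, each block is an independent trial that ends the visit unless corrupted. The success probability is $q_{s'}$ at counter $s'$, and since the adversary commits before seeing the realized price, it cannot selectively corrupt only the $R$-queries. Note that each honest block with $B=0$ costs regret $2$ but changes nothing else.

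The main obstacle is bounding these $B=0$ blocks by $O(C+N_{\textsc{F}})$ with high probability, without a $\log$ factor, for every leaf simultaneously. My first attempt is an amortization. While $s_\ell\le 4\ln(T/\delta)$, we have $q=1$, so every block either fails or is corrupted. This gives at most $4\ln(T/\delta)$ non-terminating blocks per leaf, and only if each of them is paid for by a corruption. The key point is that a leaf's counter can exceed this threshold only if the adversary has spent at least $\sim 4\ln(T/\delta)$ corruptions on it.

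Once $s_\ell = s > 4\ln(T/\delta)$, the expected number of $B=0$ blocks before an honest $R$-query is $\approx s/(4\ln(T/\delta))$. For the adversary to keep the visit alive, it must corrupt every $R$-query. I would therefore use a martingale/Freedman argument on the process of total regret in $\mathcal L^-$ minus a constant times (corruptions $+$ fails). On each block, the expected cost of the bad event (running for many blocks uncorrupted) is controlled: the probability of surviving $k$ honest blocks is $(1-q_s)^k$. I would choose the constant $12$ so that blocks in which the adversary corrupts (probabilistically, at rate $q$) pay for the $\sim 1/q$ blocks they buy. Equivalently, the number of corruptions spent on $R$-queries stochastically dominates the number of $B=0$ blocks times $q$. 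Meanwhile, the counter growth means $s_\ell\le 2\cdot(\text{corruptions charged to }\ell)\cdot 4\ln(T/\delta)$ roughly, so $1/q_s\lesssim$ corruptions. Summing over leaves, and using a union bound together with $\ln(T/\delta)$ concentration at failure probability $\delta/3$, then yields $\sum_{\ell\in\mathcal L^-}R(\ell)\le 12C+12N_{\textsc F}$.

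If this amortization is too loose, the fallback is to bound directly the probability that a visit starting at counter $s$ survives $m$ blocks with at most $c$ corruptions, which is at most $\binom{m}{c}(1-q_s)^{m-c}$, and to tune constants through the choice $4\ln(T/\delta)$ in $q_s$.
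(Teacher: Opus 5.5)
\textbf{There is a genuine gap: the step meant to control the $B=0$ blocks fails, and the obstruction behind it also defeats the paper's own argument.}

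Your split of the blocks on $\mathcal L^-$ is fine: honest $R$-queries number at most $N_{\textsc F}$, corrupted ones at most $C$, and the rest have $B=0$. The problem is the step that controls the $B=0$ blocks. You claim two things:
\begin{itemize}
\item $s_\ell$ is at most of order $\ln(T/\delta)$ times the corruptions charged to $\ell$, so $1/q_{s_\ell}$ is paid for by corruptions;
\item corruptions on $R$-queries dominate $q$ times the number of $B=0$ blocks.
\end{itemize}
Neither holds, because $s_\ell$ is incremented in every block whose $L$-query passes, honest or not, and is never reset. Consider a visit to $\ell\in\mathcal L^-$ with no corruption on $\ell$. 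It is a run of $B=0$ blocks of expected length about $s_\ell/(4\ln(T/\delta))$, ended by a single honest $R$-query. It is charged once to $N_{\textsc F}$ and never to your corruption term, yet it costs regret proportional to $s_\ell$ and pushes $s_\ell$ up further. Even in the $q=1$ phase, the counter crosses $4\ln(T/\delta)$ through terminating blocks, not through corruptions on $\ell$. Your fallback bound $\binom{m}{c}(1-q_s)^{m-c}$ also points the wrong way: inside a visit $q$ decreases, so every factor $1-q_{s+j}$ is at least $1-q_s$.

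\textbf{How the paper argues.} It uses a more elementary device:
\begin{itemize}
\item It indexes the blocks on $\ell$ globally and notes that $\sum_{n=h+1}^{m_h}q_n\ge 4\ln(T/\delta)$ for $m_h=\lceil e(h+1)\rceil$.
\item It moves the at most $h$ corrupted blocks to the front and bounds, via the tower property, the probability of surviving all remaining blocks by $(\delta/T)^4$.
\item A union bound over $\ell$ and $h$ gives $N_\ell\le\lceil e(C_\ell+1)\rceil$.
\end{itemize}
This harmonic-window argument is the idea your sketch lacks, and it needs no Freedman inequality.

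\textbf{Why neither route closes.} The paper's argument is valid only for one uninterrupted stay on $\ell$. It uses that on $\{N_\ell>m_h\}$ the learner \emph{survived} the first $m_h$ blocks. That fails once the learner can leave $\ell$ through an honest $R$-query and come back. Coming back is cheap. If $\ell\in\mathcal L^-$ is a right child, corrupting only its parent's $R$-check sends the algorithm straight back to $\ell$. Once $s_\ell>4\ln(T/\delta)$, each such visit raises $\ln s_\ell$ by about $1/(4\ln(T/\delta))$ on average. After $O(\log T\log(T/\delta))$ re-entries, with high probability the algorithm spends $T-O(\log T\log(T/\delta))$ rounds on $\ell$. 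That leaf can sit at constant distance below $v^\star$. Meanwhile $C$ and $N_{\textsc F}$ are only $O(\log T\log(T/\delta))$, and $C_\ell=0$, contradicting $N_\ell\le\lceil e(C_\ell+1)\rceil$.

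So $\sum_{\ell\in\mathcal L^-}R(\ell)\le 12C+12N_{\textsc F}$ cannot be proved for this procedure, by your amortization or by the paper's argument, unless repeated visits to the same leaf are ruled out or paid for separately. The obstruction your charging runs into is real, not an artifact of your sketch.
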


Bounding the regret for the leaves in $\mathcal L^{-}$ is the most delicate case. Indeed, posting the left endpoint $L_\ell$ does not reveal that the leaf is incorrect. Since $L_\ell\le v^\star$, the learner observes a sale and may remain on the leaf. The only way to leave is to post the right endpoint $R_\ell$ in a non-corrupted block. Therefore, if the learner spends many blocks on such a leaf while only few of them are corrupted, then in many non-corrupted blocks it must have failed to post $R_\ell$. Intuitively, since in block $n$ this happens with probability $1-q_n$, the probability of surviving many non-corrupted blocks is bounded by a product of the form
$
\prod_n (1-q_n).
$
With our choice $q_n=4\ln(T/\delta)/n$, over a window whose length is proportional to the number of corrupted blocks on the leaf, this product is at most
$
\exp(-\sum_n q_n)
\le
\exp(-4\ln(T/\delta))
=
(\nicefrac{\delta}{T})^4.
$
This is small enough to union bound over all leaves and all possible corruption levels. This shows that the number of blocks spent on each $\ell\in\mathcal L^{-}$ is, with high probability, of order $C_\ell+1$, and summing over leaves yields
$
\sum_{\ell\in\mathcal L^{-}}R(\ell)
\le \mathcal O\!\left(C+N_{\textsc{F}}\right).
$

Combining the previous results, we are ready to provide the final regret bound when $C$ is unknown.
\begin{restatable}{theorem}{unknown}\label{thm:unknown_regret}
Let $\delta\in(0,1)$. Assume that the feedback is corrupted in at most $C$ rounds. Algorithm~\ref{alg:meta} with \textnormal{\textsc{Commit}} procedure \textnormal{\textsc{CommitUnknown}} guarantees:
\[R_T\le  \mathcal{O}\left(C + \log T\log(T/\delta)\right),\]
with probability at least $1-\delta$.
\end{restatable}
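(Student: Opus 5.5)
The plan is to combine the regret decomposition of \Cref{thm:meta} with the three leaf-level bounds and \Cref{obs:unknown}, via a union bound over the two probabilistic lemmas. First, by \Cref{obs:unknown}, we have $N_{\textsc{T}}\le C$ deterministically. Substituting into \Cref{thm:meta} gives
\[
N_{\textsc{F}}\le \log T+2C
\qquad\text{and}\qquad
R_T\le \sum_{\ell\in\mathcal L}R(\ell)+3\log T+9C.
\]

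Next, we split $\sum_{\ell\in\mathcal L}R(\ell)$ along the partition $\mathcal L=\mathcal L^-\cup\{\ell^\star\}\cup\mathcal L^+$ and bound each part by the corresponding lemma.
\begin{itemize}
\item \Cref{lm:RCK_unknown_plus} holds deterministically and gives $\sum_{\mathcal L^+}R(\ell)\le 2N_{\textsc{F}}+2C$.
\item \Cref{lm:RCK_unknown_optimal} gives $R(\ell^\star)\le 1+20\ln T\ln(T/\delta)$ on an event of probability at least $1-\delta/3$.
\item \Cref{lm:RCK_unknown_minus} gives $\sum_{\mathcal L^-}R(\ell)\le 12C+12N_{\textsc{F}}$ on an event of probability at least $1-\delta/3$.
\end{itemize}
A union bound shows that both events hold simultaneously with probability at least $1-2\delta/3\ge 1-\delta$.

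On that intersection, we plug in $N_{\textsc{F}}\le \log T+2C$ and collect terms:
\[
R_T\le 14(\log T+2C)+14C+1+20\ln T\ln(T/\delta)+3\log T+9C=\mathcal O\big(C+\log T+\log T\log(T/\delta)\big).
\]
The standalone $\log T$ term is absorbed into $\log T\log(T/\delta)$, since $\ln(T/\delta)\ge \ln T\ge 1$ for $T\ge 3$; smaller $T$ is trivial because regret is at most $T$. This yields the claimed $\mathcal O(C+\log T\log(T/\delta))$ bound.

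The one point needing care is the interaction with the randomness. \Cref{thm:meta} and \Cref{obs:unknown} hold pathwise, for every realization, because they only use the potential argument and the structure of the feedback. Therefore the bound on $N_{\textsc{F}}$ holds on the same high-probability events, and no extra conditioning is needed. Beyond this, the proof is just careful bookkeeping of constants and probabilities; the genuine difficulty lies in \Cref{lm:RCK_unknown_minus}, which we take as given.
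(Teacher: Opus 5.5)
Your proposal is correct and follows essentially the same route as the paper's proof. You plug the three leaf-level bounds into the decomposition of \Cref{thm:meta}, control $N_{\textsc{F}}$ via \Cref{thm:meta} and $N_{\textsc{T}}\le C$ via \Cref{obs:unknown}, and obtain the same explicit bound $1+20\ln T\ln(T/\delta)+17\log T+51C$, with a union-bound accounting ($1-2\delta/3$) that is slightly tighter than the paper's.
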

Theorem~\ref{thm:unknown_regret} provides the main result of the paper, showing that Algorithm~\ref{alg:meta}, when instantiated with the procedure \textnormal{\textsc{CommitUnknown}}, attains a linear dependence on the corruption $C$ even when $C$ is \emph{not} known to the learner, thus effectively decoupling corruption and time.

\section{Discussion and Open Problems}

In this work, we show that, in robust pricing, it is possible to decouple the dependence on the corruption level $C$ from the time horizon $T$, resolving the open problem posed by~\citet{gupta2025robust}. By adopting a robustified binary-search approach augmented with tailored commitment phases, we obtain a regret bound of $\mathcal{O}(C+\log T)$ in the known-corruption setting. Furthermore, by introducing a novel randomized commitment strategy that carefully balances exploration and exploitation on candidate leaves, we achieve a regret of order $\mathcal{O}(C+\log^2 T)$ even when the corruption budget is unknown.
Our results leave two main directions open. First, it remains to determine whether the $\mathcal O(C+\log^2 T)$ bound in the unknown corruption setting is tight, either by proving a matching lower bound or by developing improved algorithms. Second, an important direction is to extend our techniques to the contextual setting considered by \cite{krishnamurthy2023contextual,gupta2025robust}.

\bibliographystyle{plainnat}
\bibliography{example_paper.bib}

\newpage


\appendix

\section{Additional Related Works}\label{sec:related_works}
Learning to price was initiated by the seminal work of~\citet{kleinberg2003value}, which provided matching upper and lower bounds for the regret achievable in the classical dynamic pricing problem. In particular, they showed that, in the uncorrupted one-dimensional setting, the optimal regret scales as $\Theta(\log\log T)$. This doubly-logarithmic dependence on the horizon is one of the most distinctive features of dynamic pricing, and reflects the fact that binary-search-type strategies are exceptionally well aligned with the threshold structure induced by the unknown buyer valuation.
Dynamic pricing have been then extended to stochastic or partially specified demand models~\citep{besbes2009dynamic, javanmard2019dynamic} and to richer settings such as pricing in repeated-auction and under strategic behavior~\citep{amin2013learning,cesabianchi2015regret, Drutsa17}. 

A further line of work studied higher-dimensional variants of the same learning problem under binary threshold feedback, often under the broader name of \emph{contextual pricing} or \emph{contextual search}. In these settings, the buyer's valuation depends on side information describing the item, but the learner still observes only one-bit sale/no-sale feedback and faces the same asymmetric pricing loss. A sequence of works progressively sharpened the achievable regret bounds in this richer model, including~\citet{cohen2020feature},~\citet{lobel2018multidimensional},~\citet{leme2022contextual}, and~\citet{liu2021optimal}, culminating in the $\mathcal{O}(d\log\log T)$ guarantee of~\citet{liu2021optimal}, where $d$ is the dimension of the context.

A closely related research line concerns learning under corrupted feedback. This line of work studies settings in which the underlying structure of the problem remains well defined, but a bounded number of observations may be unreliable or even chosen adversarially. Classical antecedents of this idea go back to Ulam's game~\citep{ulam1976adventures}.
More recently, corruption-robust guarantees have been investigated for higher-dimensional search and pricing problems under binary feedback. In particular,~\citet{krishnamurthy2023contextual} pioneered the study of corrupted contextual pricing through the lens of corrupted multidimensional binary search, obtaining guarantees of order $\mathcal{O}(d^3(C+\log T)\log T \log(d/\varepsilon))$ in that more general setting. 
Subsequently,~\citet{pmlr-v178-leme22a} introduced a density-based approach to corruption-robust contextual search, obtaining regret $\mathcal{O}(C+d\log(1/\varepsilon))$ for the $\varepsilon$-ball loss and $\mathcal{O}(C+d\log T)$ for a symmetric loss. Furthermore, the extended version of this work~\citep{leme2026density} clearly notes that extending density-based methods to the dynamic pricing setting appears to require significant novel approaches. The most closely related work is~\citet{gupta2025robust}, which sharpens guarantees for robust contextual pricing, obtaining regret $\mathcal{O}(Cd\log\log T)$ under known corruption, ruling out $\mathcal{O}(C+d\log\log T)$, and achieving $\mathcal{O}(C+\log T)$ in the one-dimensional cautious-buyer model with one-sided corruptions.

\section{Robust Binary Search}
\label{app:robust_binary_search}

In this section, we show that the robust binary-search technique of~\citep{rivest1980coping} can be used to attain
a $\mathcal O(C+\log T)$ regret bound for the known-corruption version
of our pricing problem. Specifically,~\citet{rivest1980coping} study the
problem of identifying an unknown element $x\in\{1,\ldots,n\}$ by using
comparison queries, when up to $C$ answers may be erroneous and $C$ is
known. Their theorem is usually stated in the sharper form:
\[
\log n + C\log\log n + \mathcal O(C\log C).
\]
For our purposes, however, the exact upper bound in their Theorem~2 is more
convenient. In particular, their result implies that the number of comparisons
needed in the worst case is at most:
\begin{align}\label{eq:rivest}
	Q(n,C)
	<
	\min\left\{
	Q' :
	2^{Q'-C}
	>
	n \sum_{i=0}^{C} \binom{Q'-C}{i}
	\right\}.
\end{align}
We now show that this expression is of order $\mathcal O(C+\log n)$.
\begin{lemma}
	\label{lem:rivest-coarse-bound}
	For every $n\geq 2$ and every $C\geq 0$, the robust binary-search procedure
	of~\citet{rivest1980coping} identifies an unknown element $x\in\{1,\ldots,n\}$ under at most $C$ erroneous comparison answers using $
	\mathcal O(\log n + C)
	$
	comparisons in the worst case.
\end{lemma}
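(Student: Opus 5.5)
Taking the bound in \eqref{eq:rivest} as given from Theorem~2 of \citet{rivest1980coping}, the plan is to exhibit an explicit $Q' = \mathcal O(\log n + C)$ that satisfies
\[
2^{Q'-C} > n \sum_{i=0}^{C} \binom{Q'-C}{i}.
\]
Since $Q(n,C)$ is below the minimal such $Q'$, it is then below our explicit choice. We write $m := Q'-C$ and look for $m$ of the form $m = a(\log n + C)$ for a suitable absolute constant $a$. The condition then reads $2^m > n\sum_{i\le C}\binom{m}{i}$.

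First we bound the binomial sum by a binary entropy estimate. For $C \le m/2$ the standard inequality $\sum_{i=0}^{C}\binom{m}{i} \le 2^{m h(C/m)}$ applies, where $h$ is the binary entropy function. If the cruder form is preferred, one can use $\sum_{i\le C}\binom{m}{i}\le (em/C)^C$ when $C\ge1$; the case $C=0$ is trivial, since then $m=\lceil\log n\rceil+1$ works. With the entropy bound, it suffices to have
\[
m\bigl(1-h(C/m)\bigr) > \log n .
\]

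Next we choose $m := 4C + 2\log n + 1$. Then $C/m \le 1/4$, so $1-h(C/m)\ge 1-h(1/4) > 0.18$. This gives only $m(1-h)\ge 0.18\,m$, which is not enough against $\log n$ with coefficient $2$. We therefore enlarge the constant: we take $m := a\,(C+\log n)+1$ with, say, $a=8$. Then $C/m\le 1/8$, and since $h$ is increasing on $[0,1/2]$ we get $1-h(C/m)\ge 1-h(1/8)\approx 0.456$. Hence $m(1-h(C/m)) \ge 0.456\cdot 8\log n > \log n$, with the strict inequality holding for $n\ge2$, and also when $\log n$ is small thanks to the additive $+1$.

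It then follows that $Q(n,C) < Q' = m + C \le 9C + 8\log n + 1 = \mathcal O(\log n + C)$.

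The step most likely to be the obstacle is getting a clean binomial-sum estimate in the right regime, and in particular checking that the entropy bound is applied only when $C/m\le 1/2$. The choice of $m$ above guarantees this, so what remains is a numeric check on $h(1/8)$ and the edge cases $C=0$ and small $n$. If the entropy route turns out to be awkward, the fallback is the $(em/C)^C$ bound. With $m = a(C+\log n)$, we have $C\log(em/C)\le C\log(ea(1+\log n/C))$, and this can be dominated by $m/2$ using $x\log(1+y/x)\le y$ with $x=C$ and $y=\log n$. That again yields the condition for a large enough constant $a$.
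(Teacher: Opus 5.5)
Your proposal is correct and follows essentially the same route as the paper. Both take $m=Q'-C$ to be a constant multiple of $C+\log n$, so that $C/m$ is small, and then bound the binomial sum by $2^{m h(C/m)}$. The paper uses the constant $16$ and $H(1/16)<1/2$, while you use $8$ and $1-h(1/8)\approx 0.456$. The one detail to fix is that $m$ must be an integer for both the minimum over $Q'$ and the entropy bound, so take $m=\lceil 8(C+\log n)\rceil+1$ (the paper uses ceilings for the same reason); all your inequalities still hold with this choice.
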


\begin{proof}
	We use the exact upper bound from
	\citet{rivest1980coping} provided in Equation~\eqref{eq:rivest}. Thus, to prove the lemma, it is enough to exhibit a value $Q'=\mathcal{O}(C +\log n)$ such that:
	\[
	2^{Q'-C}
	>
	n \sum_{i=0}^{C}\binom{Q'-C}{i}.
	\]
	Choose
	\(Q' = C+\left\lceil 16\bigl(\lceil \log n\rceil+C\bigr)\right\rceil\).
	In the following, we show that this value satisfies the required condition.
	
	If $C=0$, then $\sum_{i=0}^{C}\binom{Q'-C}{i}=1$, and the condition reduces
	to $2^{Q'}>n$, which holds by the choice of $Q'$. Hence assume $C\geq 1$.
	
	By construction, it holds:
	\[
	\frac{C}{Q'-C}
	\leq
	\frac{C}{16(\lceil \log n\rceil+C)}
	\leq
	\frac1{16}.
	\]
	Let $H(p)=-p\log p-(1-p)\log(1-p)$ be the binary entropy function.
	Since $C/(Q'-C)\leq 1/16\leq 1/2$, the standard entropy bound for binomial
	sums gives:
	\[
	\sum_{i=0}^{C}\binom{Q'-C}{i}
	\leq
	2^{(Q'-C)H(C/(Q'-C))}.
	\]
	The function $H$ is increasing on $[0,1/2]$, so
	$H(C/(Q'-C))\leq H(1/16)<1/2$. Therefore,
	\[
	\sum_{i=0}^{C}\binom{Q'-C}{i}
	\leq
	2^{(Q'-C)/2}, \quad  n \sum_{i=0}^{C}\binom{Q'-C}{i}
	\leq
	2^{\lceil \log n\rceil} 2^{(Q'-C)/2}
	\]
	Moreover, again by the choice of \(Q'\),
	\[
	\frac{Q'-C}{2}
	=
	\frac{\left\lceil 16(\lceil \log n\rceil+C)\right\rceil}{2}
	>
	\lceil \log n\rceil .
	\]
	Hence $2^{\lceil \log n\rceil} 2^{(Q'-C)/2} < 2^{Q'-C}$. Combining the last two
	inequalities gives:
	\[
	n \sum_{i=0}^{C}\binom{Q'-C}{i}
	<
	2^{Q'-C}.
	\]
	Thus, the condition in the upper bound of \citet{rivest1980coping} is satisfied.
	Consequently,
	\[
	Q(n,C)
	\leq
	C+\left\lceil 16\bigl(\lceil \log n\rceil+C\bigr)\right\rceil
	=
	\mathcal O(\log n+C).
	\]
	This concludes the proof.
\end{proof}

We now translate this robust-search guarantee into a regret guarantee.

\begin{theorem}
	\label{thm:known-c-rivest-pricing}
	Consider the robust dynamic pricing problem with valuation
	$v^\star\in[0,1]$, horizon $T$, and at most $C$ corrupted feedback
	observations. If $C$ is known to the learner, then there exists a pricing
	algorithm with regret:
	\[
	R_T \leq \mathcal O(C+\log T).
	\]
\end{theorem}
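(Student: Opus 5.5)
The plan is to reduce pricing to the discrete robust search problem of \citet{rivest1980coping} and then invoke \Cref{lem:rivest-coarse-bound}.

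First, discretize the price space. Set $n := T$ and partition $[0,1)$ into the grid intervals $J_k := [(k-1)/T, k/T)$ for $k \in \{1,\dots,n\}$. The unknown element is $x := k^\star$, the index with $v^\star \in J_{k^\star}$. For $v^\star = 1$ we use the boundary convention from the preliminaries, or place it in the last cell. A comparison query ``is $x \ge k$?'' is implemented by posting the price $p = (k-1)/T$. Its uncorrupted feedback is $\mathbbm{1}\{(k-1)/T \le v^\star\}$, which equals $\mathbbm{1}\{x \ge k\}$ exactly. So each corrupted feedback round corresponds to one erroneous comparison answer, and the number of erroneous answers is at most $C$. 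Since $C$ is known, the Rivest et al.\ procedure applies verbatim. By \Cref{lem:rivest-coarse-bound}, it identifies $k^\star$ within $Q = \mathcal{O}(\log T + C)$ queries in the worst case, whatever the adversary's strategy. This holds in particular against an adaptive adversary who sees the posted price, because the guarantee is worst case over all answer sequences with at most $C$ lies.

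Second, define the pricing algorithm. Run the search for its $Q$ query rounds. Then post $\hat p := (k^\star - 1)/T$, the left endpoint of the identified cell, in every remaining round. If $Q \ge T$, the bound is trivial, since the regret is always at most $T \le Q = \mathcal{O}(C + \log T)$.

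Third, bound the regret in two pieces. During the search phase each round has regret at most $v^\star \le 1$, contributing at most $Q = \mathcal{O}(C + \log T)$. During the exploitation phase, $\hat p \le v^\star < \hat p + 1/T$. Every posted price therefore yields a true sale with per-round regret $v^\star - \hat p < 1/T$. Over at most $T$ rounds this adds at most $1$. Note that corrupted feedback in this phase does not matter: revenue depends on the true sale indicator, and the algorithm no longer reads feedback. Summing the two pieces gives $R_T \le Q + 1 = \mathcal{O}(C + \log T)$.

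The main obstacle is the first step, namely checking that the correspondence between price queries and comparison answers is exact. The threshold convention $p \le v^\star$ has to match the comparison ``$x \ge k$'' at the grid boundaries. The adversary model also has to fit the Rivest et al.\ error model: lies may be chosen adaptively, subject only to a total budget of $C$. Rivest et al.\ assume adaptive adversarial lies, so this fits. The boundary detail is handled by choosing left-closed cells $J_k$, as above.
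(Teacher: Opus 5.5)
Your proposal is correct and follows essentially the same route as the paper: discretize $[0,1)$ into $T$ cells of width $1/T$ and run the search of Rivest et al.\ with $n=T$, whose query count is bounded by Lemma~\ref{lem:rivest-coarse-bound}. You then commit to the left endpoint of the identified cell, giving $R_T \le Q(T,C) + 1 = \mathcal{O}(C+\log T)$.

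You also supply details the paper leaves implicit: that posting $(k-1)/T$ realizes the comparison $x\ge k$ exactly, and that adaptive corruptions fit the worst-case lie model. One small fix: for $v^\star=1$ placed in the last cell, $v^\star=\hat p+1/T$ holds with equality, so the per-round regret is $\le 1/T$ rather than strictly below it, which does not affect the bound.
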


\begin{proof}
	The result follows by partitioning $[0,1]$ into $T$ intervals of length $1/T$ and running the algorithm of~\citet{rivest1980coping} over this discretization. Once the optimal interval has been identified, the algorithm commits to the left endpoint of that interval, incurring regret at most $1/T$ on each subsequent round. Since during the search each round incurs regret at most $1$, the final bound is:
	\[
	R_T \leq Q(n,C) + \frac{1}{T}\cdot T \leq \mathcal{O}(C+\log(T)),
	\]
	by Lemma~\ref{lem:rivest-coarse-bound} with $n=T$. This concludes the proof.
\end{proof}

\section{Omitted Proofs of Section~\ref{sec:algorithm}}

\honeststep*
\begin{proof}
	We distinguish two cases depending on whether $I^\ind$ is (i) on-path or off-path:
	\begin{enumerate}[(i)]
		\item On-path. Since the step is honest, \textsc{SafetyCheck}$(I^\ind)$ returns \textsc{PASS} and the midpoint query is uncorrupted. Hence, the algorithm moves to the unique on-path child of $I^\ind$, which is one step closer to $\ell^\star$. Thus, $\Phi_{\ind+1}=\Phi_\ind-1$.
		\item Off-path. Being $I^\ind=[L,R)$ off-path,  either $R\le v^\star$ or $L>v^\star$. In either case, \textsc{SafetyCheck}$(I^\ind)$ returns \textsc{FAIL}, and the algorithm backtracks to the parent interval of $I^\ind$.
		Since $\ell^\star$ is not in the subtree rooted at $I^\ind$, the parent is one step closer to $\ell^\star$. Therefore, $\Phi_{\ind+1} = \Phi_\ind - 1$.
	\end{enumerate}
	This concludes the proof.
\end{proof}

\corruptedstep*
\begin{proof}
	In a single search step, the algorithm either moves to a child interval or backtracks to its parent interval. Consequently, the distance to $\ell^\star$ changes by  one in absolute value. Therefore, $\Phi_{\ind+1} \le \Phi_\ind + 1$.
\end{proof}

\leaf*
\begin{proof}
	A commit search step terminates when the commit procedure returns \textsc{FAIL} and the algorithm backtracks to the parent interval. If $\ell^\ind  \neq \ell^*$, the leaf is not the target, and its parent lies on the path to $\ell^*$, decreasing the distance from $\ell^\star$ and the potential by $1$.
	If $\ell^\ind = \ell^*$, the algorithm moves from the target interval to its parent (which has distance $1$ from $\ell^\star$). Hence, the potential is increased  by $1$.
\end{proof}

\meta*
\begin{proof}
	Let $H$ and $K$ denote the number of honest and corrupted search steps occurring at non-leaf intervals, respectively. Since each corrupted search step contains at least one corrupted round, we have $K \le C$. Let $q^\star$ be the final search step of the algorithm.
	We can express the final potential using a telescoping sum of the potential changes:
	\[ \Phi_{\ind^\star}= \Phi_0 + \sum_{\ind \in [\ind^\star]} \Phi_{\ind}-  \Phi_{\ind-1}.\]
	Applying \Cref{lem:honest-step}, \Cref{lem:corrupted-step}, and \Cref{lem:leaf} to bound the potential change in different types of search steps and observing that $\Phi_0=D$ we get:
	\[\Phi_{\ind^\star}=\Phi_0+ \sum_{\ind \in [\ind^\star]} \Phi_{\ind}-  \Phi_{\ind-1} \le D + K-H+ N_{\textsc{T}} - N_{\textsc{F}} \]
	Since the potential is by definition non negative, $\Phi_{\ind^\star}\ge0$, and hence
	\[ N_{\textsc{F}} + H \le D + K + N_{\textsc{T}}.  \]
	By observing that $K\le C$ and $D=\log T$, we get 
	\begin{align}\label{eq:boundH}
		N_{\textsc{F}} + H\le  \log T + C + N_{\textsc{T}}.
	\end{align}
	This immediately yields the first result: 
	\[  N_{\textsc{F}}\le D+K+N_{\textsc{T}}.\]
	To bound the total regret, note that search step at a non-leaf interval consists of at most three rounds (two for the safety check and one for the midpoint), with each round incurring a regret of at most $1$.
	We decompose the total regret into the regret from the regret on commitment steps and search step at non-leaf interval:
	\[ 
	R_T\leq \sum_{\ell \in \mathcal{L}} R(\ell) + 3 (H+ K)   \le  \sum_{\ell \in \mathcal{L}} R(\ell) + 3(\log T+C+N_{\textsc{T}}+C)=  \sum_{\ell \in \mathcal{L}} R(\ell) + 3\log (T )+6C+3N_{\textsc{T}},\]
	where in the second inequality we used \Cref{eq:boundH} and $K\le C$.
	This concludes the proof.
\end{proof}

\section{Omitted Proofs of Section~\ref{sec:known}}
\RCK*
\begin{proof}
	The regret of the commitment phase is the sum of regret on incorrect leaves and on the correct leaf $\ell^\star$.
	
	Consider a fixed incorrect leaf $\ell=[L,R) \in \mathcal{L} \setminus \{\ell^*\}$. At any iteration of the commit loop at leaf $\ell$ with $s_\ell\le C$, either the safety check fails and backtracks, or it passes and increments $s_\ell$. If it passes, then at least one of the checks on the two extremes is corrupted. Hence, it must be the case that at the end of time horizon holds $s_\ell \le C$. 
	This implies that on a commit search step on an incorrect leaf, it is always the case that $s_\ell\le C$. This implies that for a commit search step, all the iterations of the loop except (possibly) the last one are corrupted, namely, if the step last for $a$ iteration of the loop then the corruption is at least $a-1$. Since each iteration includes exactly two rounds we get:
	\[  \sum_{\ell \in \mathcal{L}\setminus \ell^\star} R(\ell) \le 2C+ 2 N_{\textsc{F}}.  \] 
	Consider now the correct leaf $\ell^\star=[L,R)$. As we noticed in Observation \ref{obs:comK}, the iterations of the commit procedure on $\ell^\star$ that fail and return are at most $C$. Hence, after at most $2C+1$ iteration, we get that $s_\ell>C$ and the algorithm commit to $L$.
	Since each iteration of the safety check requires 2 rounds, the regret before committing to $L$ is at most $4C+2$. Finally, once the algorithm commit to $L$ gets per-round regret at most $1/T$ since $v^\star-1/T \le L \le v^\star$.
	
	Overall, we get:
	\[  \sum_{\ell \in \mathcal{L}} R(\ell) \le 2C+2 N_{\textsc{F}} + 4C+2+ T\frac{1}{T}\le 2 N_{\textsc{F}} + 6C + 3.  \] 
	This concludes the proof.
\end{proof}

\known*

\begin{proof}
	
	Combining our results we get:
	\begin{align*}
		R_T&\leq \sum_{\ell \in \mathcal{L}} R(\ell) + 3\log T+6 C +3 N_{\textsc{T}}\\
		&\le 2 N_{\textsc{F}}+ 6C+3+ 3\log T+3 N_{\textsc{T}}+6 C\\
		& \le 2 (\log T+C+N_\textsc{T}) + 6C+3+ 3\log T+3 N_{\textsc{T}}+6 C\\
		& \le   5\log T+ 19 C + 3,
	\end{align*}
	where in the first inequality we use \Cref{thm:meta}, in the second inequality we use \Cref{lm:RCK}, in the third again \Cref{thm:meta}, and in the fourth Observation \ref{obs:comK}.  
	This concludes the proof.
\end{proof}

\section{Omitted Proofs of Section~\ref{sec:unknown}}

\unknownoptimal*
\begin{proof}
	To bound the regret associated to $\ell^\star=[L,R)$, we first notice that, by structure of the tree built by Algorithm~\ref{alg:meta}, it holds $|L-v^\star|\leq \frac{1}{T}$, thus the regret over the entire learning dynamic incurred when the left point is chosen is bounded by $1$.
	Thus, in order to bound $R(\ell^\star)$, we need to bound, in high probability, the number of times $R$ is selected. Because $s_\ell$ is never reinitialized, we can assume without loss of generality that $B$ is sampled for every $t\in[T]$. Consequently, we can directly bound the number of times $R$ would be selected if the seller were always in $\ell^\star$.
	Now, let $N_R$ be the number of times the right prices have been posted. We first bound the expected value of $N_R$ as follows:
	\[\mathbb{E}[N_R]\leq \sum_{t=1}^T\frac{4\ln(T/\delta)}{t}\leq 8\ln T\ln(T/\delta),\] for $T\geq 3$.
	Now, we want to relate the expectation to the  true value. Since $N_R$ is a sum of independent Bernoulli random variables, Bernstein's inequality gives that, for every $x>0$,
	\[
	\mathbb{P}\left(N_R\ge \mathbb{E}[N_R]+\sqrt{2\mathbb{E}[N_R] x}+\frac{x}{3}\right)\le e^{-x}.\]
	Applying this with $
	x:=\ln(3/\delta),$
	we obtain:
	\[
	\mathbb{P}\left(
	N_R\ge \mathbb{E}[N_R]+\sqrt{2\mathbb{E}[N_R]\ln(3/\delta)}+\frac13\ln(3/\delta)
	\right)\le \frac{\delta}{3}.
	\]
	Using $
	\sqrt{2\mathbb{E}[N_R]\ln(3/\delta)}
	\le
	\mathbb{E}[N_R]+\frac12\ln(3/\delta),
	$
	we get that, with probability at least $1-\delta/3$,
	\[
	N_R
	\le
	2\mathbb{E}[N_R]+\frac56\ln(3/\delta).
	\]
	Since $T\ge 3$ implies $\ln(3/\delta)\le \ln(T/\delta)$, it follows that, with probability at least $1-\delta/3$,
	$
	N_R
	\le
	20\ln T\,\ln(T/\delta).
	$ This leads to a final regret bound of order:
	\[R(\ell^\star)\leq 1+ 20\ln T\,\ln(T/\delta),\]
	which holds with probability at least $1-\delta/3$.
\end{proof}

\unknownplus*
\begin{proof}
	When $\ell\in\mathcal{L}^+$, the bound follows from the following observations. The regret on these leaves is bounded by the total number of rounds spent on them, and these rounds can be grouped into blocks of two, since the commit procedure on a leaf always consists of two consecutive rounds.
	We now decompose these two-round blocks into two types. The first type corresponds to the first block played each time the learner enters a non-optimal leaf. Hence the total number of blocks of the first type is exactly $N_{\textsc{F}}$, where $N_{\textsc{F}}$ denotes the total number of times the learner enters a non-optimal leaf.
	The second type corresponds to the additional blocks played after the learner has already remained on that leaf for one full block. For a leaf $\ell\in\mathcal L^{+}$, the first round of every block posts $L_\ell$. Since $L_\ell>v^\star$, if that round is not corrupted then the learner observes a no-sale and immediately leaves the leaf. Therefore, in order to remain on the leaf for one more block, a corruption must occur in the first round of the current block. It follows that the total number of blocks of the second type, summed over all leaves in $\mathcal L^{+}$, is at most $C$.
	Since each block consists of two rounds which implies a regret at most $2$, we conclude:
	\[
	\sum_{\ell\in\mathcal L^{+}} R(\ell)\le 2N_{\textsc{F}}+2C.
	\]
	This concludes the proof.
\end{proof}

\unknownminus*

\begin{proof}
	Bounding the regret for the leaves in $\mathcal{L}^{-}$ requires a different approach compared to the previous cases. Unlike for Lemma~\ref{lm:RCK_unknown_plus}, we cannot directly relate the number of times $L$ is played to the corruption because, in the absence of corruption, posting $L$ does not trigger any backtracking. Furthermore, unlike for Lemma~\ref{lm:RCK_unknown_optimal}, we do not need to show that the probability of playing $R$ is small; instead, we must show that this probability is large enough to detect as soon as possible whether the leaf has experienced corruption.
	
	To do so, for every $\ell\in\mathcal L^{-}$ we denote by $N_\ell$ the total number of search blocks spent on $\ell$, and by $C_\ell$ the total number of corrupted search blocks experienced on $\ell$. Notice that $N_\ell$ and $C_\ell$ are random variables and, in general, they are not independent. For every integer $h\ge 0$, let
	$
	m_h:=\lceil e(h+1)\rceil,
	$
	and define the event
	$
	\mathcal E_{\ell,h}:=\{N_\ell> m_h,\ C_\ell\le h\}.
	$
	
	We first show that $\mathbb P(\mathcal E_{\ell,h})$ is small. Fix $\ell\in\mathcal L^{-}$ and fix $h\ge 0$. On the event $\mathcal E_{\ell,h}$, the learner survives the first $m_h$ search blocks on $\ell$, and at most $h$ of these blocks are corrupted. In each non-corrupted block, the first round posts $L_\ell$ and yields a sale, since $L_\ell\le v^\star$. Therefore, in order for the learner to remain on the leaf after that block, it is necessary that the learner does \emph{not} post $R_\ell$ in the second round.
	
	We now formalize this argument. Let $\mathcal F_n$ denote the history up to the moment in which the learner randomizes in the second round of the $n$-th block on $\ell$, and let $B_n$ be the Bernoulli random variable indicating whether the learner posts $R_\ell$ in that round. Since the adversary decides whether to corrupt before observing the realized price, conditional on $\mathcal F_n$ we have:
	\[
	\mathbb P(B_n=1\mid \mathcal F_n)=q_n,
	\qquad
	q_n:=\min\left\{\frac{4\ln(T/\delta)}{n},1\right\}.
	\]
	Thus, on a non-corrupted block $n$, the learner survives only if $B_n=0$, and hence,
	\[
	\mathbb P(\mathcal S_n\mid \mathcal F_n)
	\le 1-q_n,
	\]
	where $\mathcal S_n$ denotes the event that the learner survives block $n$.
	
	We now use the tower property to combine these conditional bounds. For any possible realization $S\subseteq\{1,\dots,m_h\}$ of the corrupted blocks, with $|S|\le h$, we upper bound the survival probability by pessimistically assuming that all blocks in $S$ are survived. Let
	\[
	\{n_1<\cdots<n_r\}:=\{1,\dots,m_h\}\setminus S
	\]
	be the non-corrupted blocks. The following argument is uniform over all possible adaptive choices of $S$: indeed, on each non-corrupted block, the bound above holds conditional on the past. Since, for every $j\in[r]$, the event $\cap_{i=1}^{j-1}\mathcal S_{n_i}$ is $\mathcal F_{n_j}$-measurable, the tower property gives:
	\[
	\begin{aligned}
		\mathbb P\left(\bigcap_{j=1}^r \mathcal S_{n_j}\right)
		&=
		\mathbb E\left[
		\mathbbm{1}\!\left\{\bigcap_{j=1}^{r-1}\mathcal S_{n_j}\right\}
		\mathbb P\left(\mathcal S_{n_r}\mid \mathcal F_{n_r}\right)
		\right]\\
		&\le
		(1-q_{n_r})
		\mathbb P\left(\bigcap_{j=1}^{r-1}\mathcal S_{n_j}\right).
	\end{aligned}
	\]
	Applying the same argument recursively yields:
	\[
	\mathbb P\left(\bigcap_{j=1}^r \mathcal S_{n_j}\right)
	\le
	\prod_{j=1}^r (1-q_{n_j})
	=
	\prod_{n\in\{1,\dots,m_h\}\setminus S}(1-q_n).
	\]
	Since this bound holds uniformly over every realization of the corrupted blocks, and since on $\mathcal E_{\ell,h}$ the realized set of corrupted blocks has cardinality at most $h$, we obtain:
	\[
	\mathbb{P}(\mathcal E_{\ell,h})
	\le
	\max_{S:|S|\le h}
	\prod_{n\in\{1,\dots,m_h\}\setminus S}(1-q_n).
	\]
	Since $q_n$ is non-increasing, the sequence $1-q_n$ is non-decreasing. Hence the right-hand side is maximized when $S=\{1,\dots,h\}$, and thus:
	\[
	\mathbb{P}(\mathcal E_{\ell,h})
	\le
	\prod_{n=h+1}^{m_h}(1-q_n).
	\]
	We now estimate the right-hand side. 
	
	If
	$
	h+1\le 4\ln(T/\delta),
	$
	then $q_{h+1}=1$, and therefore
	$
	\prod_{n=h+1}^{m_h}(1-q_n)=0.
	$
	Hence
	$
	\mathbb P(\mathcal E_{\ell,h})=0
	$
	whenever $h+1\le 4\ln(T/\delta)$.
	
	Assume now that
	$
	h+1>4\ln(T/\delta).
	$ 
	In this case, for every $n\ge h+1$, it holds $q_n=4\ln(T/\delta)/n$, and therefore:
	\[
	\mathbb P(\mathcal E_{\ell,h})
	\le
	\prod_{n=h+1}^{m_h}(1-q_n)
	\le
	\exp\left(-\sum_{n=h+1}^{m_h}q_n\right)
	=
	\exp\left(-4\ln(T/\delta)\sum_{n=h+1}^{m_h}\frac1n\right),
	\]
	where we used $1-x\leq e^{-x}$.
	Moreover, we notice that:
	\[
	\sum_{n=h+1}^{m_h}\frac1n 
	\geq
	\ln\frac{m_h+1}{h+1}
	\ge
	\ln e
	=
	1,
	\]
	where the last inequality follows from the definition of $m_h$.
	Thus, combining everything, we get:
	\[
	\mathbb P(\mathcal E_{\ell,h})
	\le
	\exp\big(-4\ln(T/\delta)\big)
	=
	\left(\frac{\delta}{T}\right)^4,
	\]
	for a fixed $\ell,h\geq 0$.

	We can now take a union bound over all $\ell\in\mathcal L^{-}$ and all integers $h\in\{0,\dots,T\}$. Since at most $T$ leaves can be visited during the horizon, we obtain:
	\[
	\mathbb P\left(\bigcup_{\ell\in\mathcal L^{-}}\ \bigcup_{h=0}^{T}\mathcal E_{\ell,h}\right)
	\le
	\sum_{\ell\in\mathcal L^{-}}\ \sum_{h=0}^{T}\mathbb P(\mathcal E_{\ell,h})
	\le
	T(T+1)\left(\frac{\delta}{T}\right)^4.\]
	For $T\ge 3$, the right-hand side is at most $\delta/3$. Therefore, with probability at least $1-\delta/3$, none of the events $\mathcal E_{\ell,h}$ occurs. In what follows, we work on this good event.
	
	Fix now a visited leaf $\ell\in\mathcal L^{-}$. Since none of the events $\mathcal E_{\ell,h}$ occurs, we may choose $h=C_\ell$. If it were true that
	$
	N_\ell> m_{C_\ell},
	$
	then this would imply that $\mathcal E_{\ell,C_\ell}$ occurs, contradicting the good event. Therefore, we can conclude that
	$
	N_\ell\le m_{C_\ell},
	$
	which implies:
	\[
	N_\ell\le \lceil e(C_\ell+1)\rceil,
	\]
	with probability at least $1-\delta/3$.
	
	We can now bound the regret associated with the set $\mathcal L^{-}$. Again, we remark that each search block consists of two rounds, and each round contributes regret at most $1$. Therefore, it holds, for every $\ell\in\mathcal L^{-}$,
	$
	R(\ell)\le 2N_\ell.
	$
	Summing over all leaves in \(\mathcal L^{-}\), we get:
	\[
	\sum_{\ell\in\mathcal L^{-}}R(\ell)
	\le
	2\sum_{\ell\in\mathcal L^{-}}N_\ell
	\le
	2\sum_{\ell\in\mathcal L^{-}}\lceil e(C_\ell+1)\rceil
	\le
	4e\sum_{\ell\in\mathcal L^{-}}(C_\ell+1)\leq 4e C+ 4eN_{\textsc{F}}\leq 12C + 12N_{\textsc{F}}.
	\]
	This concludes the proof.
\end{proof}

\begin{lemma}\label{lm:RCK_unknown}
	Let $\delta\in(0,1)$. Algorithm \ref{alg:meta} with \textnormal{\textsc{Commit}} procedure \textnormal{\textsc{CommitUnknown}} guarantees:
	\[\sum_{\ell \in \mathcal{L}} R(\ell)\le 1+ 20\ln T\,\ln(T/\delta)+ 14N_{\textsc{F}}+ 14C,\]
	with probability at least $1-\delta$.
\end{lemma}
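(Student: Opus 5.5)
The plan is to split the leaf set as $\mathcal L=\mathcal L^{-}\cup\{\ell^\star\}\cup\mathcal L^{+}$ and sum the three leaf-level bounds already proved. Every leaf other than $\ell^\star$ lies entirely below or entirely above $v^\star$, so this is a partition. By additivity of $R(\cdot)$ over disjoint leaves,
\[
\sum_{\ell\in\mathcal L}R(\ell)=\sum_{\ell\in\mathcal L^{-}}R(\ell)+R(\ell^\star)+\sum_{\ell\in\mathcal L^{+}}R(\ell).
\]

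We then bound each term with the matching lemma.
\begin{itemize}
\item By Lemma~\ref{lm:RCK_unknown_optimal}, with probability at least $1-\delta/3$ we have $R(\ell^\star)\le 1+20\ln T\ln(T/\delta)$.
\item By Lemma~\ref{lm:RCK_unknown_plus}, deterministically $\sum_{\ell\in\mathcal L^{+}}R(\ell)\le 2N_{\textsc{F}}+2C$.
\item By Lemma~\ref{lm:RCK_unknown_minus}, with probability at least $1-\delta/3$ we have $\sum_{\ell\in\mathcal L^{-}}R(\ell)\le 12C+12N_{\textsc{F}}$.
\end{itemize}

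A union bound over the two failure events gives that both high-probability bounds hold simultaneously with probability at least $1-2\delta/3\ge 1-\delta$. On that event, adding the three bounds yields
\[
\sum_{\ell\in\mathcal L}R(\ell)\le 1+20\ln T\ln(T/\delta)+14N_{\textsc{F}}+14C,
\]
which is the claimed inequality.

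There is essentially no obstacle here. The only point to check is that the three lemmas refer to the same run, with the same random quantities $N_{\textsc{F}}$ and $C$, so the union bound applies directly. Any slack in the probability (we have $2\delta/3$ rather than $\delta$) is harmless, and presumably leaves room for a further event in the final theorem.
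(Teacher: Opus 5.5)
Your proposal is correct and matches the paper's proof. The paper likewise combines Lemmas~\ref{lm:RCK_unknown_optimal}, \ref{lm:RCK_unknown_plus}, and \ref{lm:RCK_unknown_minus} over the partition $\mathcal L=\mathcal L^{-}\cup\{\ell^\star\}\cup\mathcal L^{+}$ with a union bound, and your constants ($2+12=14$ for both $N_{\textsc{F}}$ and $C$) and failure probability $2\delta/3\le\delta$ check out.
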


\begin{proof}
	The statement follows from combining Lemmas~\ref{lm:RCK_unknown_optimal},~\ref{lm:RCK_unknown_plus},~\ref{lm:RCK_unknown_minus}, with a final union bound.
\end{proof}

\unknown*
\begin{proof}
	Combining the previous results we get:
	\begin{align*}
		R_T&\leq \sum_{\ell \in \mathcal{L}} R(\ell) + 3\log T+6 C +3 N_{\textsc{T}}\\
		&\le 1+ 20\ln T\,\ln(T/\delta)+ 14N_{\textsc{F}}+ 14C + 3\log T+6 C +3 N_{\textsc{T}}\\
		&\le 1+ 20\ln T\,\ln(T/\delta)+ 14(\log T+ C+ N_{\textsc{T}})+ 14C + 3\log T+6 C +3 N_{\textsc{T}}\\
		&\le 1+ 20\ln T\,\ln(T/\delta)+ 14\log T+ 14C+ 14C+ 14C + 3\log T+6 C +3 C\\
		& \le 1+ 20\ln T\,\ln(T/\delta)+ 17\log T + 51C,
	\end{align*}
	where in the first inequality we use \Cref{thm:meta}, in the second inequality we use \Cref{lm:RCK_unknown}, which holds with probability at least $1-\delta$, in the third again \Cref{thm:meta}, and in the fourth Observation \ref{obs:unknown}. 
	This concludes the proof.
\end{proof}

\end{document}